\theoremstyle{plain}
\newtheorem{theorem}{Theorem}
\theoremstyle{definition}
\newtheorem{definition}{Definition}
\newtheorem{remark}{Remark}
\begin{document}

\twocolumn[
\icmltitle{TANGO: Clustering with Typicality-Aware Nonlocal Mode-Seeking and Graph-Cut Optimization}

\icmlsetsymbol{equal}{*}

\begin{icmlauthorlist}
\icmlauthor{Haowen Ma}{comp}
\icmlauthor{Zhiguo Long}{comp}
\icmlauthor{Hua Meng}{math}
\end{icmlauthorlist}

\icmlaffiliation{comp}{School of Computing and Artificial Intelligence, Southwest Jiaotong University, Chengdu, China}
\icmlaffiliation{math}{School of Mathematics, Southwest Jiaotong University, Chengdu, China}

\icmlcorrespondingauthor{Zhiguo Long}{zhiguolong@swjtu.edu.cn}
\icmlcorrespondingauthor{Hua Meng}{menghua@swjtu.edu.cn}

\icmlkeywords{Machine Learning, ICML}

\vskip 0.3in
]

\printAffiliationsAndNotice{}  %

\begin{abstract}
Density-based mode-seeking methods generate a \emph{density-ascending dependency} from low-density points towards higher-density neighbors.
Current mode-seeking methods identify modes by breaking some dependency connections, but relying heavily on local data characteristics, requiring case-by-case threshold settings or human intervention to be effective for different datasets.
To address this issue, we introduce a novel concept called \emph{typicality}, by exploring the \emph{locally defined} dependency from a \emph{global} perspective, to quantify how confident a point would be a mode. 
We devise an algorithm that effectively and efficiently identifies modes with the help of the global-view typicality. 
To implement and validate our idea, we design a clustering method called TANGO, which not only leverages typicality to detect modes, but also utilizes graph-cut with an improved \emph{path-based similarity} to aggregate data into the final clusters. Moreover, this paper also provides some theoretical analysis on the proposed algorithm. Experimental results on several synthetic and extensive real-world datasets demonstrate the effectiveness and superiority of TANGO. The code is available at ~\url{https://github.com/SWJTU-ML/TANGO_code}.
\end{abstract}

\section{Introduction}
Density-based clustering methods~\cite{Hartigan1975ClusteringA,DBLP:conf/kdd/EsterKSX96,DBLP:journals/pami/TobinZ24} have gained wide attention and thorough investigation 
due to their capability in handling complex data distributions.
Mean Shift~\cite{DBLP:journals/pami/Cheng95} and Quick Shift~\cite{DBLP:conf/eccv/VedaldiS08} are two of the most representative density-based mode-seeking methods.
Mean Shift iteratively converges data points into multiple clusters along the path of the steepest ascent of the density function.
In contrast, Quick Shift enhances efficiency by establishing a \emph{density-ascending dependency} through directly linking each data point to its nearest neighbor with higher density and uses a distance threshold to break the dependency to obtain modes and clusters, eliminating the iterative process of Mean Shift. 
However, Quick Shift is highly sensitive to the threshold parameter, and it always misidentifies outliers as modes.

Building upon this, Density Peaks Clustering (DPC)~\cite{Rodriguez2014Clustering} introduces a novel approach to detect modes, where modes of the data (cluster centers) are selected based on a decision graph given by both the local density and the distance to the nearest point with higher density. Each non-center point continues following the density-ascending dependency like Quick Shift, ultimately connecting to a mode (cluster center) to complete clustering. Subsequent researchers have proposed various improvements of DPC tailored for manifold-structured, unevenly dense, and noisy data~\cite{DBLP:journals/isci/LiuWY18,DBLP:journals/isci/WangWZZQ23}.

\begin{figure}[tb]
    \centering
    \includegraphics[width=0.95\linewidth]{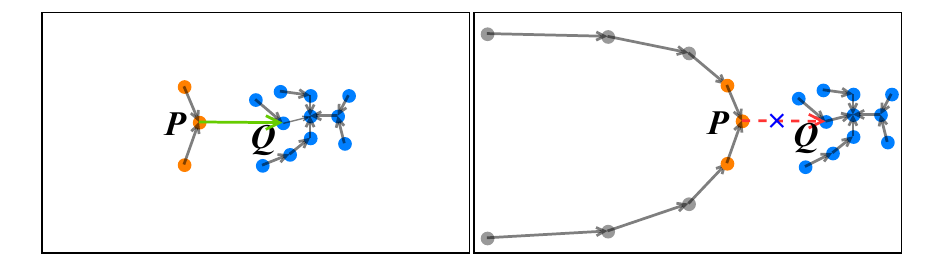}
    \caption{Illustration of that whether point $P$ should be considered as a mode is influenced by the global structure of the dataset, which varies with the presence of gray points. 
    Here, each data point links to its nearest neighbor of higher density.}
    \label{fig:Mot_Dataset}
\end{figure}

However, local data characteristics such as density and distance of neighboring points are sometimes not enough to appropriately determine if a data point should be considered as a mode.
As illustrated in Fig.~\ref{fig:Mot_Dataset} (left), 
when the dataset consists of only the blue and orange points, the point $P$ represents the density peak of a sparse cluster containing only a few points. 
In this scenario, such a small cluster would usually be considered as outliers and should be merged into a neighboring higher-density cluster, indicating that $P$ should not be identified as a mode in this case. 

On the other hand, when the gray points are included in the dataset (right, Fig.~\ref{fig:Mot_Dataset}), the cluster whose density peak is $P$ becomes substantially larger and forms a distinct group within the data distribution. In this case, merging it into another cluster would be inappropriate, suggesting that $P$ should indeed be recognized as a mode.

In fact, despite the local distribution around $P$ remains constant, the varying global patterns will change the \emph{confidence} for $P$ to be a mode. Existing approaches like~\cite{DBLP:conf/eccv/VedaldiS08,Rodriguez2014Clustering,DBLP:conf/icml/JiangJK18,DBLP:journals/isci/WangWZZQ23} can only address such scenarios through case-by-case threshold settings or substantial human intervention, which is ineffective across datasets or even impractical.

We found that the confidence for a point $P$ to be a mode can be naturally revealed from a global perspective via the density-ascending dependency. 
Specifically, we introduce the concept of \emph{typicality}, which measures how confident a point would be a mode, by considering the collective set of points that ultimately converge toward it through the density-ascending dependency, as well as the strength of the dependencies. 
Points with higher typicality are more likely to be modes, and
by comparing the typicality of a point with that of its higher-density neighbor, we can determine whether to break the dependency at this point to form a mode. 
For example, in Fig.~\ref{fig:Mot_Dataset} (left), the typicality of $P$ is lower than that of $Q$, and the dependency from $P$ to $Q$ should be retained; while in Fig.~\ref{fig:Mot_Dataset} (right), the typicality of $P$ is higher than that of $Q$ and the dependency should be broken.
In this way, typicality enables mode-seeking that eliminates the need for more human intervention in such cases and better captures the inherent cluster structure.

In this work, we develop a novel clustering algorithm \emph{TANGO}, which leverages the proposed typicality to detect modes and employ graph-cut technique~\cite{DBLP:conf/nips/NgJW01,DBLP:journals/isci/LongGMYL22} with an improved path-based similarity to aggregate data into final partition. Particularly, we first partition the data into tree-like sub-clusters by a typicality-aware mode-seeking approach; then, the inter sub-cluster similarities are evaluated through an improved path-based connectivity~\cite{DBLP:conf/nips/FischerRB03,DBLP:journals/pr/ChangY08,DBLP:journals/jmlr/LittleMM20};
and finally, by using a graph-cut method, we aggregate the sub-clusters from a global perspective.

The contributions of this paper are as follows:
\begin{itemize}
    \item By integrating local and overall distribution characteristics of data, a new clustering framework that fuses local and global information is proposed, introducing a new global perspective into mode-seeking methods.
    \item We introduce typicality to reveal the global importance of data points under locally defined density-based dependency, and utilize it to detect modes in a way without human intervention; we also provided theoretical analysis on the uniqueness and statistical property of typicality, as well as on computational efficiency of the algorithm.
    \item An improved path-based similarity is devised to comprehensively and efficiently evaluate the similarity of sub-clusters and graph-cut is employed to determine the final clusters.
\end{itemize}
The rest of this paper is organized as follows: The next two sections review related work and preliminaries, and then we present the details of the proposed TANGO algorithm, followed by a comparison of TANGO with multiple state-of-the-art algorithms on extensive datasets through experiments.

\section{Related Work}\label{sec:related}

Our proposed method leverages mode-seeking and spectral clustering. The following discussion reviews the corresponding relevant literature.

\subsection{Density-Based Mode-Seeking}

Density-based clustering by mode-seeking relies on density estimation, establishment of dependency relationships and identification of modes (e.g., density peaks).

The earlier Mean Shift~\cite{DBLP:journals/pami/Cheng95} 
iteratively moves each data point to the mean of its neighborhood,
which is equivalent to establishing dependency relationships in the direction of the steepest ascent of density~\cite{DBLP:journals/jmlr/Arias-CastroMP16}.
Quick Shift~\cite{DBLP:conf/eccv/VedaldiS08} establishes dependency relationships by directly associating each point with its nearest neighbor of higher density, and identifies \emph{modes} as points not dependending on others, which significantly accelerates Mean Shift.
HQuickShift~\cite{DBLP:conf/icde/AltinigneliMBP20} integrates hierarchical clustering with Quick Shift and a Mode Attraction Graph to help determine the hyperparameter for establishing dependencies, to deal with the sensitivity to hyperparameters of Quick Shift.
Quick Shift++~\cite{DBLP:conf/icml/JiangJK18} improves Quick Shift by computing density using k-nearest neighbors 
and identifying modal-sets~\cite{DBLP:conf/aistats/JiangK17} instead of point as modes, which are connected components of mutual neighborhood graph determined by a density fluctuation parameter. 
DCF~\cite{DBLP:conf/icdm/TobinZ21} improves the efficiency of modal-sets identification based on Quick Shift++. 
CPF~\cite{DBLP:journals/pami/TobinZ24} 
reduces the sensitivity to the density fluctuation parameter.

Compared with Quick Shift and its variants, the essential difference of DPC~\cite{Rodriguez2014Clustering} is that it identifies modes by a decision graph of density and distance. 
Optimizations of DPC usually focus on density estimation, modes identification, and dependency relationships.
DPC-DBFN~\cite{DBLP:journals/pr/LotfiMB20} employs fuzzy kernels to compute density and identifies connected components containing cluster centroids in mutual KNN graphs. 
SNN-DPC~\cite{DBLP:journals/isci/LiuWY18} considers shared nearest neighbors to better characterize similarities and further refines the dependency relationships for boundary points by a KNN voting strategy. 
VDPC~\cite{DBLP:journals/isci/WangWZZQ23} utilizes two user-defined parameters to optimize dependency relationships. 
DPC-GD~\cite{DBLP:journals/mlc/DuDXX18} and DLORE-DP~\cite{DBLP:journals/kbs/ChengZH20} address manifold-structured data by substituting Euclidean distance with geodesic distance. 
However, these approaches only focus on locally defined density and dependency, lacking global characteristics of data.

There also exist methods that initially create sub-clusters by utilizing the density-ascending dependency derived from DPC and Quick Shift, and then aggregates them using other clustering techniques to obtain final clustering results.
For example, to aggregate sub-clusters,
FHC-LDP~\cite{DBLP:journals/ijon/GuanLHZC21} and LDP-MST~\cite{DBLP:journals/tkde/ChengZHWY21} use hierarchical clustering; 
LDP-SC~\cite{DBLP:journals/isci/LongGMYL22} and DCDP-ASC~\cite{DBLP:journals/tsmc/ChengHZZL22} use spectral clustering; 
NDP-Kmeans~\cite{Cheng2023KMeans} applies K-means clustering with geodesic distance; 
DEMOS~\cite{DBLP:journals/tkde/GuanLCHC23} uses a linkage-based approach to aggregate these sub-clusters.

\subsection{Spectral Clustering}

Spectral clustering algorithm~\cite{DBLP:conf/nips/NgJW01,DBLP:journals/sac/Luxburg07} is a classical graph-cut method, which partitions the vertices of a graph by applying K-means on the spectral embeddings obtained through the eigenvectors of the graph Laplacian matrix. It was noticed by~\cite{DBLP:journals/jacm/LeeGT14} that the $k$-th smallest eigenvalue of the normalized Laplacian matrix determines whether a graph could be well-partitioned into $k$ groups, and the upper bound on the number of vertices misclassified by spectral clustering has been proven by ~\cite{DBLP:conf/esa/KolevM16,DBLP:conf/icml/Macgregor022,DBLP:journals/ol/Mizutani21,DBLP:journals/siamcomp/Peng0Z17}. To improve the efficiency of spectral clustering, sampling-based methods~\cite{DBLP:conf/kdd/YanHJ09,DBLP:journals/pami/ChenSBLC11,DBLP:journals/tcyb/CaiC15,DBLP:journals/tkde/HuangWWLK20} were used to reduce the size of similarity matrix and the cost of eigen-decomposition, and there is also some method~\cite{DBLP:conf/nips/Macgregor23} that leverages a power method to completely avoid the time-consuming eigen-decomposition. When the number of target clusters is not pre-specified in spectral clustering, a series of methods~\cite{DBLP:journals/pr/XiangG08,DBLP:journals/sac/Luxburg07,DBLP:journals/tkde/BrandesDGGHNW08,DBLP:conf/nips/Zelnik-ManorP04} have been developed to deal with this situation.

\section{Preliminaries}
Let $X=\{x_1,\ldots,x_n\}$ denote a set containing $n$ data points, where $x_i\in \mathbb{R}^d$ represents a data point. Let $A\in \mathbb{R}^{n\times n}$ be the similarity matrix among the data points, and $\rho_i$ denote the density of $x_i$. Assuming that a dependency matrix $B\in \mathbb{R}^{n\times n}$ has been constructed based on $A$, where $B_{ij}$ represents the dependency from $x_i$ to $x_j$. The specific definitions of these concepts will follow later in Section~\ref{tango}. In this section, we briefly introduce how Quick Shift and DPC work.

\subsection{Quick Shift}
As described in Algorithm~\ref{alg:quickshift}, for each data point $x_i \in X$, Quick Shift sets $B_{ij}=A_{ij}$ if and only if $x_j$ is the nearest higher-density neighbor of $x_i$, to establish the density-ascending dependency. Quick Shift then breaks the dependency at point $x_i$ and recognize $x_i$ as a mode if $A_{ij}$ is less than the human-determined similarity threshold. All points that converge to the same mode through the dependency are considered as in the same cluster.

\begin{algorithm}[tb]
\caption{Quick Shift}
\label{alg:quickshift}
\begin{algorithmic}[1]
\REQUIRE Dataset $X$, similarity matrix $A$, density $\rho$, similarity threshold $\tau$
\ENSURE Mode set $\mathsf{mode}$, Clustering results $\mathsf{labels}$
\STATE $\mathsf{mode} \leftarrow \varnothing$; Initialize dependency matrix $B$ as a zero matrix
\FOR{each $x_i \in X$}
    \STATE $x_j \leftarrow \underset{x_j:\rho_j>\rho_i}{\operatorname{argmax}} \, A_{ij}$ %
    \STATE $B_{ij} \leftarrow A_{ij}$ %
\ENDFOR
\FOR{each point $x_i$ with $B_{ij} \neq 0$}
    \IF{$B_{ij} < \tau$}
        \STATE $B_{ij} \leftarrow 0$ %
        \STATE $\mathsf{mode} \leftarrow \mathsf{mode} \cup \left\{x_i\right\}$
    \ENDIF
\ENDFOR
\STATE Initialize cluster label array $\mathsf{labels}$
\FOR{each $x_i \in X$}
    \STATE $x_c \leftarrow x_i$ %
    \WHILE{$x_c \notin \mathsf{mode}$}
        \STATE Find $j$ where $B_{cj} \neq 0$
        \STATE $x_c \leftarrow x_j$ %
    \ENDWHILE
    \STATE $\mathsf{labels}(x_i) \leftarrow \mathsf{labels}(x_c)$ %
\ENDFOR
\RETURN $\mathsf{mode}$, $\mathsf{labels}$
\end{algorithmic}
\end{algorithm}

\subsection{Density Peaks Clustering}
The only difference between Density Peaks Clustering (DPC) and Quick Shift is that DPC considers not only similarity but also density to break the dependency and obtain modes. Specifically, DPC breaks the dependency from $x_i$ to its nearest higher-density neighbor $x_j$ if $A_{ij}$ is less than the similarity threshold and $\rho_i$ is also larger than the density threshold simultaneously, avoiding identifying some outliers as modes.

In fact, not only Quick Shift and DPC, but also many other existing mode-seeking approaches~\cite{DBLP:conf/icml/JiangJK18,DBLP:journals/isci/WangWZZQ23, DBLP:journals/tkde/GuanLCHC23} require human intervention or case-by-case threshold setting to identify modes effectively.

\begin{figure*}[tb]
    \centering
    \includegraphics[width=\linewidth]{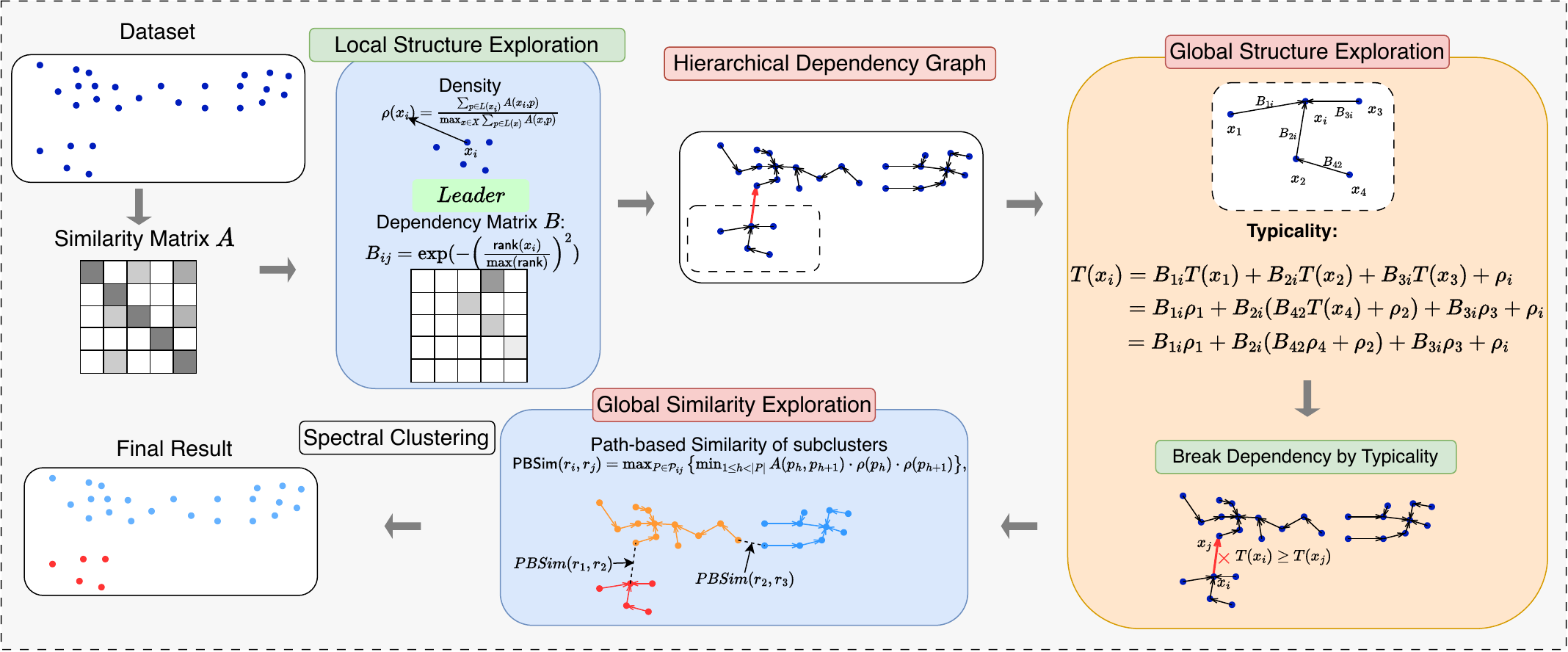}
    \caption{The framework of the proposed TANGO algorithm.}
    \label{fig:Framework}
\end{figure*}
\section{The Proposed TANGO Algorithm}\label{tango}
The overall framework of the proposed TANGO algorithm is shown in Figure~\ref{fig:Framework}.
TANGO will first construct a density-ascending dependency graph by exploring local similarity and density structures; then evaluate the typicality of points by exploring the dependency from a global perspective, and break some of the dependency connections by exploiting typicality to identify modes and sub-clusters; and finally, use a graph-cut with an improved path-based similarity to aggregate sub-clusters to achieve clustering.
\subsection{General Definition for Typicality}
In the traditional mode-seeking framework, algorithms like Quick Shift and DPC make data points depend on their nearest higher-density neighbors to generate a density-ascending dependency. While these methods balance efficiency and accuracy, with some theoretical guarantees \cite{DBLP:conf/nips/Jiang17,DBLP:journals/pami/TobinZ24}, they primarily consider local information such as density and similarity to break the dependency and obtain modes, making them require extensive human intervention such as case-by-case threshold setting or manual operation to identify modes effectively.

To address this drawback, we propose to further explore the dependency relationship to model a global-view ``typicality'' of points to quantify the confidence for them to be modes. We suppose that typicality should satisfy the following postulates:

\begin{itemize}
    \item (\textbf{P1}) Typicality should consider the own density of a point. Points with higher density may have higher confidence to be modes, as they represent significant concentrations in the data distribution.
    \item (\textbf{P2}) The typicality of a point should also be related to all points that ultimately converge toward it through the density-ascending dependency, as well as the strength of the dependencies (matrix $B$). If a point accumulates stronger dependencies from its descendant points, then this point is more likely to be a genuine mode. 
\end{itemize}

Given these postulates, we can define typicality $T_i$ for each point $x_i$ generally as:
\begin{equation}\label{eq:T}
T_i=\rho_i+\sum_{j} B_{ji} T_j 
\end{equation}
where $B_{ji}\ne 0$ iff there is a dependency from $x_j$ to $x_i$.
Note that the definition of typicality in Eq.~\eqref{eq:T} is recursive and actually captures the weighted influence of all points that converge to $x_i$ through the dependency relationship. For instance, as shown on the right side of Figure~\ref{fig:Framework}, we have $T(x_i)=B_{1i}T(x_1)+B_{2i}T(x_2)+B_{3i}T(x_3)+\rho_i$, and $T(x_1)=\rho_1$, $T(x_2)=B_{42}T(x_4)+\rho_2$, $T(x_3)=\rho_3$, $T(x_4)=\rho_4$. Then $T(x_i)=B_{1i}\rho_1+B_{2i}(B_{42}\rho_4+\rho_2)+B_{3i}\rho_3+\rho_i$, indicating that $x_i$ collects typicality from $x_1$, $x_2$, $x_3$ and $x_4$, which are all points that converge to it.

Let $T=(T_1,T_2,...,T_n)^\intercal$ and $\rho=(\rho_1,\rho_2,...,\rho_n)^\intercal$ be two column vectors. Eq.~\eqref{eq:T} can be rewritten as:
\begin{equation}\label{eq:rep}
T = B^\intercal T + \rho.
\end{equation}

\begin{remark}
    If we consider each data point in a dataset as an individual webpage weighted by its density, and for each page there exist hyperlinks linking to its neighboring higher-density pages. Thus, the proposed typicality can also be considered as a special kind of \emph{PageRank Centrality}~\cite{DBLP:journals/cn/BrinP98,DBLP:journals/im/AvrachenkovL07,DBLP:journals/corr/abs-1806-07640,DBLP:journals/rsa/ChenLO17,DBLP:journals/corr/abs-math-0607507,DBLP:conf/waw/VolkovichLD07}, where $B_{ij}$ denotes the probability from $x_i$ jumping to $x_j$. When $x_i$ is a distinct mode, it creates a so-called ``attraction basin'' -- a region where all points ultimately converge toward $x_i$ through the dependency, and $x_i$ collects typicality from all points in its ``basin''.
\end{remark}

\subsection{Typicality Based on Hierarchical Dependency}
We propose a specific typicality measure based on hierarchical dependency, where each data point only depends on its nearest neighbor with higher density. Hierarchical dependency has been widely used in many density-based clustering methods such as ~\cite{DBLP:conf/eccv/VedaldiS08,Rodriguez2014Clustering,DBLP:conf/icml/JiangJK18,DBLP:journals/tkde/GuanLCHC23,DBLP:journals/pami/TobinZ24,DBLP:journals/tkde/ChengZHWY21}. Its consistency guarantee has also been theoretically proven by several articles such as ~\cite{DBLP:conf/nips/Jiang17,DBLP:journals/pami/TobinZ24}. First, we introduce some basic definitions to uncover such dependency within the dataset. Subsequently, we analyze the theoretical properties of corresponding typicality.

Complex data often exhibit specific manifold structures 
and its similarity can be better reflected by
shared nearest neighbor information~\cite{DBLP:journals/tc/JarvisP73, DBLP:journals/isci/LiuWY18}. 
Therefore, we define a similarity measure based on shared nearest neighbors, and we further distinguish the different contribution to similarity of each shared neighbor to have better robustness.

\begin{definition}[Similarity]\label{dfn:sim}
For any two data points $x_i, x_j \in X$, let $N_k(x_i)$ and $ N_k(x_j) $ denote the $k$ nearest neighbors of $x_i$ and $x_j$, respectively, and $\mathsf{SNN}_k(x_i, x_j) = N_k(x_i) \cap N_k(x_j)$. The \emph{similarity} $A(x_i, x_j)$ (or $A_{ij}$) is defined as: $A(x_i,x_j) = \sum_{p \in \mathsf{SNN}_k(x_i, x_j)} \exp({-( \frac{d(p, x_i) + d(p, x_j)}{2d_{\max}} )^2})$ if $x_i \in N_k(x_j)$ or $x_j \in N_k(x_i)$, and $0$ otherwise.
Here, $d$ denotes the Euclidean distance between two points, and $ d_{\max} $ is the maximum Euclidean distance between any point and its $k$ nearest neighbors in the dataset.
\end{definition}

Based on the above definition of similarity, we define the \emph{density} of points as usual with normalization.
\begin{definition}[Density]
For $x_i \in X$, let $L(x_i)$ denote the set of $k$ points most similar to $x_i$ in terms of the similarity matrix $A$, where $k$ is the same one used in Definition~\ref{dfn:sim}. Then, the \emph{density} of $x_i$ is:
\begin{equation}\label{eq:rho}
\rho(x_i)=\frac{\sum_{p \in L(x_i)} A(x_i, p)}{\max_{x\in X}\sum_{p \in L(x)} A(x, p)}
\end{equation}
\end{definition}

To obtain the dependency matrix $B$, similar to Quick Shift and DPC, we can establish a hierarchical \emph{density-ascending dependency} within the dataset based on similarity and density information as following, known as \emph{leader relationship}.
\begin{definition}[Leader relationship]
For $x_i$, let $\mathsf{higher}(x_i)$ denote the set of points with higher density than $x_i$ and nonzero similarity to $x_i$, i.e., 
$\mathsf{higher}(x_i) = \{ p \in X \mid \rho(p) > \rho(x_i), A(x_i, p) \ne 0 \}$. The leader point of $x_i$ is:
\begin{equation}\label{eq:leader}
\mathsf{leader}(x_i) = \left\{
\begin{array}{cl}
\underset{x_j \in \mathsf{higher}(x_i)}{\operatorname{argmax}} \, A_{ij} & \text{if } \mathsf{higher}(x_i) \neq \emptyset \\
\mathsf{None} & \text{otherwise}.
\end{array}\right.
\end{equation}
\end{definition}
According to the above definition, $\mathsf{leader}(x_i)$ is the point with higher density than $x_i$ that has the highest similarity to $x_i$. If $x_i$ has zero similarity with all points of higher density, then $x_i$ has no leader. Then the density-ascending dependency is constructed by allowing each point depending on its leader if it exists.

Beyond that, we should further consider the strength of such dependency. For instance, if a point is closer to its leader, the dependency of them should be stronger. 
However, directly considering similarity can lead to stronger connections in dense regions and weaker connections in sparse ones. Therefore, we propose considering the position of the leader within the sequence of neighbors sorted by similarity.

\begin{definition}[Rank]
    For $x_i\in X$, let $N_{\downarrow}(x_i)$ denote the list of its neighbors sorted in descending order of similarity. Let $x_j=\mathsf{leader}(x_i)$. We define $\mathsf{rank}(x_i)$ as
    \begin{equation}\label{eq:rank}
    \mathsf{rank}(x_i) =\left\{
    \begin{array}{ll}
    r & \text{if } x_j \text{ ranks $r$-th in } N_{\downarrow}(x_i)\\
    \mathsf{None} & \text{if } x_j = \mathsf{None}
    \end{array}\right.
    \end{equation}
\end{definition}
    That is, if $x_i$ has a leader, then $\mathsf{rank}(x_i)$ represents the position of $\mathsf{leader}(x_i)$ in the list of neighbors sorted in descending order of similarity to $x_i$.

We then define the \emph{dependency matrix} $B \in \mathbb{R}^{n \times n}$ as follows:
\begin{equation}\label{eq:b}
B_{ij} =
\begin{cases} 
\exp({-\left(\frac{\mathsf{rank}(x_i)}{\max(\mathsf{rank})}\right)^2}) & \text{if } x_j = \mathsf{leader}(x_i) \\ 
0 & \text{otherwise}
\end{cases}
\end{equation}
where $\max(\mathsf{rank})$ is the maximum value of $\mathsf{rank}$ among all points.

Note that since each point has at most one leader, each row of matrix $B$ has at most one non-zero element, and the dependency relationship is actually in a \emph{hierarchical} form and partitions the data into disjoint tree-like sub-parts.

When computing the typicality $T$, the non-zero elements of matrix $B$ specify the weights of contribution of the points to the typicality of their respective leaders. Using $\mathsf{rank}(x_i)$ to compute the weight can avoid the bias in favor of dense region. 

\begin{theorem}\label{thm:sol}
    For $B$ defined in Eq.~\eqref{eq:b}, there exists a unique typicality vector $T$ such that $T=B^\intercal T+\rho$ (Eq.~\eqref{eq:rep}) holds.
\end{theorem}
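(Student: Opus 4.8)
The plan is to recast the fixed-point equation as the linear system $(I - B^\intercal)T = \rho$ and to show that $I - B^\intercal$ is invertible; then $T = (I - B^\intercal)^{-1}\rho$ is simultaneously the unique solution, giving both existence and uniqueness. The heart of the argument is to prove that $B$ (equivalently $B^\intercal$) is \emph{nilpotent}: this forces every eigenvalue of $B^\intercal$ to be $0$, hence every eigenvalue of $I - B^\intercal$ to be $1$, so $\det(I - B^\intercal) = 1 \neq 0$.

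First I would establish that the leader relationship is acyclic. By Eq.~\eqref{eq:leader}, whenever $x_j = \mathsf{leader}(x_i)$ we have $\rho(x_j) > \rho(x_i)$ strictly, since $\mathsf{higher}(x_i)$ only contains points of strictly larger density. Hence along any directed path $x_{i_1}\to x_{i_2}\to\cdots$ in the dependency graph (an edge $x\to y$ meaning $y = \mathsf{leader}(x)$) the densities strictly increase; because $X$ is finite, no such path can revisit a vertex, so the dependency graph is a directed acyclic graph — in fact a forest, as each point has at most one leader. Consequently there is a permutation $\pi$ of $\{1,\dots,n\}$ that orders the points so that each $x_i$ precedes $\mathsf{leader}(x_i)$; relabeling by $\pi$ makes $B$ strictly upper triangular, so $B^m = 0$ whenever $m$ exceeds the length of the longest chain, and in particular $B^n = 0$. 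Transposing, $(B^\intercal)^n = 0$.

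Given nilpotency, $I - B^\intercal$ is invertible with $(I - B^\intercal)^{-1} = \sum_{k=0}^{n-1}(B^\intercal)^k$ as a finite Neumann series, so $T := \sum_{k=0}^{n-1}(B^\intercal)^k\rho$ exists and satisfies Eq.~\eqref{eq:rep}. For uniqueness, if $T$ and $T'$ both solve Eq.~\eqref{eq:rep}, then $D := T - T'$ obeys $D = B^\intercal D$, hence $D = (B^\intercal)^k D$ for all $k\ge 1$; taking $k = n$ yields $D = 0$. An equivalent constructive route is to process the points in order of increasing density: since $T_i = \rho_i + \sum_j B_{ji}T_j$ only involves $T_j$ of strictly lower-density points, already computed, the values are forced and consistent.

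The main obstacle — really the only non-mechanical step — is the acyclicity/nilpotency claim; the rest is routine linear algebra. The point to be careful about is that the densities $\rho(x_i)$ need not be distinct, so "density increases along every edge" must be derived from the strict inequality $\rho(p) > \rho(x_i)$ built into $\mathsf{higher}(x_i)$, not from any injectivity of $\rho$. It is also worth noting that crude norm bounds do not suffice: each row of $B$ has at most one entry, lying in $(0,1]$, so $\|B\|_\infty \le 1$, which only yields spectral radius $\le 1$; it is the forest structure that upgrades this to spectral radius $0$ and hence to invertibility of $I - B^\intercal$.
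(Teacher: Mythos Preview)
Your proposal is correct and follows essentially the same approach as the paper: both arguments reduce to the observation that, because $\mathsf{leader}(x_i)$ has strictly greater density than $x_i$, a suitable permutation (the paper sorts by density, you use a topological sort of the dependency forest) makes $B$ strictly upper triangular, from which invertibility of $I-B^\intercal$ is immediate. Your nilpotency/finite Neumann series framing and the explicit forward-substitution remark are minor embellishments of the same core idea rather than a different route.
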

\begin{proof}
    See Appendix~\ref{proof1}.
\end{proof}

As stated in the following theorem, solving for the typicality $T$ defined by Eq.~\eqref{eq:b} is highly efficient.

\begin{theorem}\label{thm:eff}
    For $B$ defined by Eq.~\eqref{eq:b}, if data points $x_1,\ldots,x_n$ are sorted by density $\rho$ in ascending order, then $T$ satisfying Eq.~\eqref{eq:rep} ($T=B^\intercal T+\rho$) can be computed in $O(n)$ time.
\end{theorem}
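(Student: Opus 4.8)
The plan is to exploit the hierarchical structure of $B$ together with the density ordering to reduce solving $T = B^\intercal T + \rho$ to a single pass. First I would observe that, under the assumed ascending ordering $\rho_1 \le \cdots \le \rho_n$, the matrix $B^\intercal$ is \emph{strictly lower triangular}: by Eq.~\eqref{eq:b}, $(B^\intercal)_{ij} = B_{ji}$ is nonzero only when $x_i = \mathsf{leader}(x_j)$, which forces $\rho_i > \rho_j$ and hence $i > j$. Consequently $I - B^\intercal$ is lower triangular with unit diagonal, so the unique solution (consistent with Theorem~\ref{thm:sol}) can be obtained by forward substitution, $T_i = \rho_i + \sum_{j<i} B_{ji} T_j$, computed in increasing order of $i$.

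Next I would bound the cost. Generic forward substitution is $O(n^2)$, but here each \emph{row} of $B$ has at most one nonzero entry (each point has at most one leader), so the total number of nonzero $B_{ji}$ over all $i$ equals the number of edges in the leader forest, which is at most $n$. Equivalently, I would present the computation as a ``push'' pass: initialize $T_i \leftarrow \rho_i$ for every $i$; then for $i = 1, \dots, n$, if $x_\ell = \mathsf{leader}(x_i)$ exists, perform $T_\ell \leftarrow T_\ell + B_{i\ell}\,T_i$. Since $\mathsf{leader}(x_i)$ and the scalar $B_{i\ell}$ are already available from the construction of $B$ and can be read in $O(1)$, each iteration is $O(1)$, so the pass plus the initialization runs in $O(n)$ total.

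Finally I would argue correctness by induction on $i$: when iteration $i$ begins, $T_i$ already equals $\rho_i + \sum_{j:\,\mathsf{leader}(x_j)=x_i} B_{ji}T_j$, because every such child $x_j$ has $\rho_j < \rho_i$, hence $j < i$, hence it was processed earlier and pushed the term $B_{ji}T_j$; and by the induction hypothesis the value $T_j$ it pushed was already final. After the pass, every coordinate satisfies Eq.~\eqref{eq:rep}, so by uniqueness (Theorem~\ref{thm:sol}) the computed vector is exactly $T$. The one mildly delicate point is this ``finalization'' step — making precise that processing points in nondecreasing density order is a valid topological order of the dependency DAG, which rests on the strict inequality $\rho(\mathsf{leader}(x_i)) > \rho(x_i)$ (so that ties in density cause no trouble); the remaining bookkeeping and the edge-count bound $\le n$ are routine.
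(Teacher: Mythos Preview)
Your proposal is correct and follows essentially the same approach as the paper: both observe that the density ordering makes $I-B^\intercal$ unit lower triangular, solve by forward substitution, and bound the total work by the number of edges in the leader forest (the paper writes $\sum_i s_i \le n-1$, you write $\le n$). Your ``push'' formulation and induction on the topological order are slightly more explicit than the paper's version, but the argument is the same.
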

 \begin{proof}
     From the proof of Theorem~\ref{thm:sol}, it is known that $I-B^\intercal$ is a lower triangular matrix with ones on the diagonal when the data points are sorted in ascending order of density. This implies that the system of linear equations $(I-B^\intercal)T=\rho$ can be solved by forward substitution, computing $T_1,T_2,\ldots,T_n$ in sequence. For each data point $x_i$, let $x_{i_1},x_{i_2}, \ldots, x_{i_{s_i}}$ be the set of points whose leader is $x_i$, we have $T_i=B_{i_1i}T_{i_1}+ \ldots + B_{i_{s_i}i}T_{i_{s_i}} + \rho_i$ ($i_1,\ldots,i_{s_i}<i$ and $\forall t=1,\ldots,s_i$, $B_{i_ti} \neq 0$). As the graph induced by $B$ consists of trees, we have $\sum_{i=1}^{n} s_i \leq n-1$. Thus, computing all $T_i$ requires $O(n)$ operations.
 \end{proof}
Based on the above theorem, to calculate $T$, we can first sort $x_1,\ldots,x_n$ by density in \emph{ascending order}, and then compute $T$ sequentially for $i$ from $1$ to $n$ as:
$T(x_j) \leftarrow T(x_j) + T(x_i)\cdot B_{ij}$ where $x_j=\mathsf{leader}(x_i)$. Note that when $x_i$ contributes typicality to its leader, the typicality of $x_i$ itself ($T(x_i)$) has already been fully determined due to the ascending order of density.

\begin{remark}
    Under Theorem~\ref{thm:sol}, we can also directly get the closed form $T =(I-B^\intercal)^{-1} \rho$ and expand it into the summation of infinite series $T = \sum_{l=0}^{\infty}(B^\intercal)^l\rho$, which can be more intuitive and interpretable to understand how typicality comes from, as $(B^\intercal)^l$ captures the weights of all $l$-hop paths in the dependency graph induced by $B$.
\end{remark}

\subsection{Typicality-Aware Mode-Seeking}
Typicality considers both local density distributions and global structural information as shown in Eq.~\eqref{eq:rep}.
Points with higher typicality have higher confidence to be modes.
Therefore, we break the dependency with the help of typicality and obtain modes accordingly.

Specifically, for $x_i$ and its leader $x_j$,
\begin{itemize}
    \item if $T(x_i) < T(x_j)$, then we keep the dependency from $x_i$ to $x_j$;
    \item if $T(x_i) \ge T(x_j)$, then we break the dependency and assign $x_i$ as a mode and the root of corresponding tree-like sub-cluster.
\end{itemize}

\begin{remark}
That is to say, points with higher confidence to be modes should not be represented by those with lower ones. $x_j$ can represent $x_i$ if and only if $x_j$ can accumulate more typicality than $(1-B_{ij})\cdot T(x_i)$ from its own density $\rho_j$ and other points excluding $x_i$. Either a larger $T(x_i)$ or a weaker dependency from $x_i$ to $x_j$ would reduce the ability of $x_j$ to represent $x_i$, while the density of $x_j$ and typicality accumulated from other points enhance its ability to represent $x_i$.
\end{remark}

Fig.~\ref{fig:Mot_Evo} illustrates the evolution of density and typicality for points $P$ and $Q$ as gray points are incrementally added in Fig.~\ref{fig:Mot_Dataset}. The results demonstrate that $T(P)$ keeps growing and ultimately surpasses $T(Q)$, allowing the algorithm to automatically determine whether to break the dependency at $P$ to form a mode. The procedure to calculate typicality and use it to detect modes is given in Algorithm~\ref{alg:mode}.

\begin{figure}[tb]
    \centering
    \includegraphics[width=\linewidth]{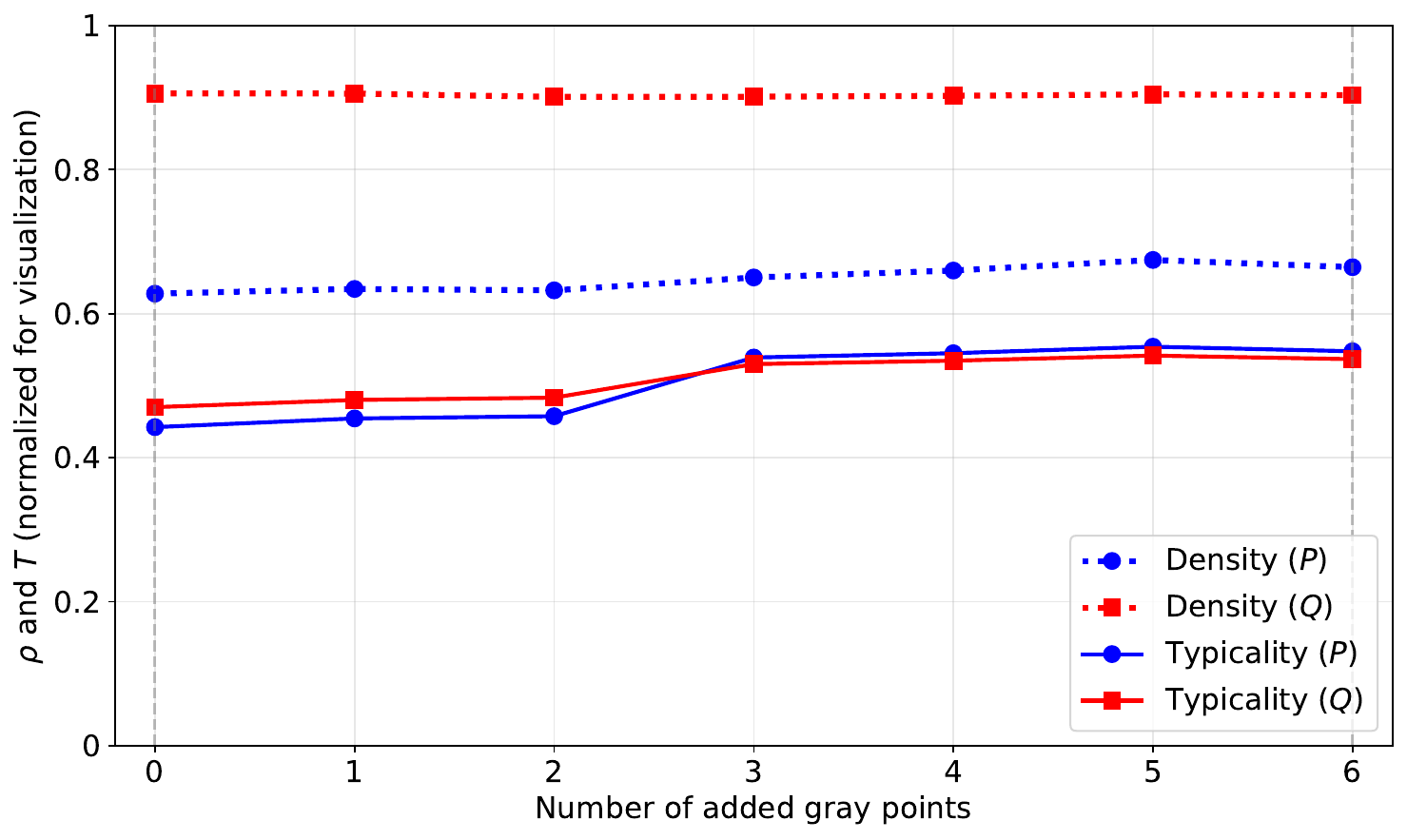}
    \caption{Evolution of density and typicality for points $P$ and $Q$ as gray points are incrementally added in Fig.~\ref{fig:Mot_Dataset}.}
    \label{fig:Mot_Evo}
\end{figure}

\begin{algorithm}[!htb]
\caption{Typicality-Aware Mode-Seeking}\label{alg:mode}
\begin{algorithmic}[1]
\REQUIRE Dataset $X$; leaders $\mathsf{leader}$; density $\rho$; $\mathsf{rank}$ defined in \eqref{eq:rank}; dependency matrix $B$.
\ENSURE $\mathsf{mode}$.
\STATE $\mathsf{mode} \leftarrow \varnothing$
\FORALL{$x_i \in X$}
    \STATE $T(x_i) \leftarrow \rho(x_i)$
\ENDFOR
\FORALL{$x_i \in X$ \textbf{sorted by ascending order of} $\rho$}
    \STATE $x_j \leftarrow \mathsf{leader}(x_i)$
    \STATE $T(x_j) \leftarrow T(x_j) + T(x_i) \cdot B_{ij}$
\ENDFOR
\FORALL{$x_i \in X$}
    \IF{$T(x_i) \geq T(\mathsf{leader}(x_i))$ \textbf{or} $\mathsf{leader}(x_i) = \mathsf{None}$}
        \STATE $\mathsf{mode} \leftarrow \mathsf{mode} \cup \left\{x_i\right\}$
    \ENDIF
\ENDFOR
\RETURN $\mathsf{mode}$
\end{algorithmic}
\end{algorithm}

\begin{remark}
    From a stochastic equation perspective, the tail behavior of typicality follows a power law distribution, which means that the possibility of data points having high confidence to be modes decreases polynomially. See Appendix~\ref{tail proof}.
\end{remark}

\subsection{Aggregating Mode-Centered Sub-Clusters}
After obtaining the modes and corresponding tree-like sub-clusters based on typicality, we then use graph-cut method to aggregate them into final cluster partitions.
To better reflect the inter-sub-cluster similarities based on the overall data distribution, we introduce an improved \emph{path-based similarity} between them based on the similarity of data points defined in Definition~\ref{dfn:sim} and inspired by~\cite{DBLP:conf/nips/FischerRB03,DBLP:journals/pr/ChangY08,DBLP:journals/jmlr/LittleMM20}. 
This similarity considers global connectivity in addition to local similarity.
Intuitively, the path-based similarity between two sub-clusters is the maximum ``connectivity'' among all paths connecting them, where the ``connectivity'' of each path is the minimum similarity between adjacent points on that path.

\begin{definition}[Path-based similarity between sub-clusters]\label{dfn:PBSim}
Let $r_i$ and $r_j$ denote the modes of two tree-like sub-clusters $G_i$ and $G_j$ generated by Algorithm~\ref{alg:mode}, and let $G$ be the similarity graph constructed based on the similarity matrix $A$. Define $\mathcal{P}_{ij}$ as the set of all paths in $G$ connecting $r_i$ and $r_j$, and let $p_h$ and $p_{h+1}$ be an adjacent pair of points on a path $P \in \mathcal{P}_{ij}$. The similarity between $G_i$ and $G_j$ is given by the following path-based similarity between their modes:
\begin{equation}\label{eq:PBSim}
\mathsf{PBSim}(r_i, r_j) = \max_{P \in \mathcal{P}_{ij}} \left\{ \min_{1 \leq h < |P|} C(p_h,p_{h+1}) \right\},
\end{equation}
where $C(p_h,p_{h+1})= 1$ if $\exists G_t$ s.t. $p_h,p_{h+1} \in G_t$, and $C(p_h,p_{h+1})= A(p_h, p_{h+1}) \cdot \rho(p_h) \cdot \rho(p_{h+1})$ otherwise.
\end{definition}
Before computing path-based similarity $\mathsf{PBSim}(r_i,r_j)$, 
we first adjust all direct similarities by weighting them according to the densities of the points, as $A(p_h,p_{h+1})\cdot \rho(p_h) \cdot \rho(p_{h+1})$, 
ensuring that paths through high-density regions have higher ``connectivity'' and those through low-density regions have lower ``connectivity''. 
Furthermore, to ensure that $\mathsf{PBSim}(r_i,r_j)$ is determined by pairs of points belonging to different sub-clusters, we set the similarity between all points in the same sub-cluster to the maximum possible value 1. More formal description about path-based similarity $\mathsf{PBSim}(r_i,r_j)$ can be found in the proof of Theorem~\ref{thm:pathbased} in Appendix~\ref{proof3}.

The final clustering result is then obtained by applying spectral clustering on these sub-clusters. Specifically, we consider each mode-centered sub-cluster as a vertex in a similarity graph, where the similarity between these sub-clusters is determined by the above path-based similarity $\mathsf{PBSim}(r_i,r_j)$, and finally the spectral clustering with Normalized Cut~\cite{DBLP:journals/sac/Luxburg07} is applied to aggregate these vertices into a final partition with the specified number of clusters. 

The reason why choosing spectral clustering for merging sub-clusters is that it comprehensively considers a global graph-cut cost of the whole partition, unlike other methods such as hierarchical clustering, which partitions the data greedily and ignores the global impact on the whole partition at each greedy step, thus always achieves an inferior or imbalanced partition. 

Algorithm~\ref{alg:main} and Fig.~\ref{fig:Framework} outline the overall procedure of the proposed algorithm TANGO. Here, a variant of Kruskal's algorithm is utilized to compute the path-based similarity and the procedure can be highly efficient, as shown in the following theorem.

\begin{theorem}\label{thm:pathbased}
    Path-based similarity between sub-clusters defined by Definition~\ref{dfn:PBSim} can be calculated in $O(nk \log (nk))$ time.
\end{theorem}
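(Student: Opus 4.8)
The plan is to reduce the computation to constructing a \emph{maximum spanning forest} of a sparse weighted graph, using the classical fact that max--min (bottleneck) path values are witnessed by spanning-tree paths. First I would record that the relevant graph is sparse: by Definition~\ref{dfn:sim}, $A(x_i,x_j)\neq 0$ only if $x_i\in N_k(x_j)$ or $x_j\in N_k(x_i)$, so $A$ has $O(nk)$ nonzero entries and the similarity graph $G$ has $n$ vertices and $|E(G)|=O(nk)$ edges. For each edge $\{p,q\}\in E(G)$ the weight $C(p,q)$ of Definition~\ref{dfn:PBSim} --- equal to $1$ when $p,q$ lie in a common sub-cluster and to $A(p,q)\,\rho(p)\,\rho(q)$ otherwise --- can be assigned in $O(nk)$ total time, since all the $A$ and $\rho$ values are already available.

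The core step is the bottleneck-tree lemma: for any two vertices $u,v$, $\max_{P:u\to v}\min_{e\in P} w(e)$ equals the minimum edge weight along the $u$--$v$ path in any maximum spanning tree of the (weighted) component containing $u$ and $v$, and equals $0$ (matching the empty-$\mathcal P_{ij}$ convention) when $u,v$ lie in different components. I would prove this by the standard cut/exchange argument, or simply cite it. Consequently, once a maximum spanning forest $F$ of $(G,C)$ has been built, $\mathsf{PBSim}(r_i,r_j)$ for every pair of sub-cluster modes is just the minimum-weight edge on the $r_i$--$r_j$ path in $F$; the entire family of path-based similarities is thus encoded by the $O(n)$-edge forest $F$, and any individual value can be retrieved in $O(n)$ time (or $O(1)$ after $O(n)$-time preprocessing of $F$). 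This compact representation --- rather than a dense pairwise matrix --- is what the subsequent graph-cut step consumes.

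Next I would describe the "variant of Kruskal" and why it exploits the sub-cluster structure. All intra-sub-cluster edges carry the maximal weight $1$, and for each sub-cluster $G_t$ the leader edges internal to $G_t$ are themselves edges of $G$ (since $\mathsf{leader}(x_i)\in\mathsf{higher}(x_i)$ forces $A(x_i,\mathsf{leader}(x_i))\neq 0$) and already form a spanning tree of $G_t$; hence a maximum spanning forest can be chosen to contain, inside each $G_t$, a spanning tree made entirely of weight-$1$ edges. Therefore one may initialise the union--find structure with each sub-cluster pre-merged into a single component --- equivalently, contract every $G_t$ to a supernode --- and then run Kruskal only on the remaining inter-sub-cluster edges, processed in decreasing weight order. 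Sorting the $O(nk)$ edge weights dominates the cost at $O(nk\log(nk))$; the $O(nk)$ union--find operations add only $O(nk\,\alpha(n))$; edge-weight assignment and querying $F$ contribute $O(nk)$ and $O(n)$ respectively. Summing yields the claimed $O(nk\log(nk))$ bound.

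The main obstacle I anticipate is not the asymptotics of Kruskal's algorithm but making the reduction airtight: verifying that the weight-$1$ edges internal to each $G_t$ really do keep $G_t$ connected (so the contraction is valid and the bottleneck value of a genuinely inter-cluster pair is never spuriously raised to $1$), correctly handling disconnected $G$ via a forest, and being precise that "computing the path-based similarity" means producing the forest $F$ representing all the values $\mathsf{PBSim}(r_i,r_j)$ --- writing out all $\Theta(m^2)$ pairwise values explicitly, where $m$ is the number of sub-clusters, would in general violate the stated time bound, so the theorem must be read as a statement about this $F$-based representation.
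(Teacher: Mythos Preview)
Your core reduction---contract each sub-cluster to a supernode and run Kruskal in decreasing weight order on the $O(nk)$ inter-cluster edges---is exactly what the paper does. The paper also cites Fischer et al.\ for the Kruskal variant and bounds the sorting step by $O(nk\log(nk))$, so on the algorithmic side the two arguments coincide.

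Where you diverge is in the output and in your final paragraph's reading of the theorem. The paper does \emph{not} stop at the forest $F$; it explicitly fills in the full $\mathsf{PBSim}$ matrix (see Algorithm~\ref{alg:main}, lines~10--11, which iterates over $\mathsf{CC}_i\times\mathsf{CC}_j$ at every merge), because the downstream spectral clustering step needs that dense $q\times q$ similarity matrix, not a tree encoding. Accordingly the paper's stated complexity is $O(|E|\log|E|+|V|^2)$, and the $|V|^2$ term is absorbed into $O(nk\log(nk))$ by the standing assumption $|V|=q\ll n$ (the same assumption used in the overall time-complexity analysis). So your assertion that ``writing out all $\Theta(m^2)$ pairwise values \dots\ would in general violate the stated time bound'' and that the theorem ``must be read'' as producing only $F$ is contrary to both the algorithm and the proof in the paper; the paper simply treats $q$ as a small quantity. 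Your forest-based accounting is in a sense more rigorous (it avoids an informal $q\ll n$), but it is not the argument the authors give, and it would leave a gap before the spectral-clustering step, where the $q^2$ (indeed $q^3$) cost reappears anyway.

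Your careful check that the leader edges keep each $G_t$ connected in $G$---so that the contraction is legitimate---is a nice point that the paper leaves implicit.
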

\begin{proof}
    See Appendix~\ref{proof3}.
\end{proof}

\begin{algorithm}[!htb]
\caption{TANGO}\label{alg:main}
\begin{algorithmic}[1]
\REQUIRE Dataset $X$, parameter $k$.
\ENSURE Clustering results $\mathsf{labels}$.
\STATE Calculate $A$ as in Definition~\ref{dfn:sim}, $\rho$ as in \eqref{eq:rho}, and $\mathsf{leader}$ as in \eqref{eq:leader}
\STATE Calculate $\mathsf{mode}$ using Algorithm~\ref{alg:mode}
\STATE $\mathsf{edges} \leftarrow \emptyset$, $\mathsf{CC}(r_i)\leftarrow i (\forall r_i \in \mathsf{mode})$, $\mathsf{PBSim}\leftarrow \mathbf{O}$
\STATE Prepare $\mathsf{edges}\leftarrow\{(C(x_i,x_j),r(x_i),r(x_j))\mid x_i,x_j\in X \text{ and }  C(x_i,x_j) \neq 1 \}$ according to Definition~\ref{dfn:PBSim} where $r(x_i)$ denote the mode of sub-cluster which $x_i$ belongs to.
\FORALL{$(C(x_i,x_j), r(x_i), r(x_j)) \in \mathsf{edges}$ \textbf{sorted in descending order of} $C(x_i,x_j)$}
    \IF{$\mathsf{CC}(r(x_i)) \neq \mathsf{CC}(r(x_j))$}
        \STATE $\mathsf{CC}_i \leftarrow \left\{p \in \mathsf{mode} \mid \mathsf{CC}(p) = \mathsf{CC}(r(x_i))\right\}$
        \STATE $\mathsf{CC}_j \leftarrow \left\{p \in \mathsf{mode} \mid \mathsf{CC}(p) = \mathsf{CC}(r(x_j))\right\}$
        \STATE $\mathsf{CC}(p_i) \leftarrow \mathsf{CC}(r(x_j))$ \textbf{for all} $p_i \in \mathsf{CC}_i$
        \FORALL{$(r_i, r_j) \in \mathsf{CC}_i \times \mathsf{CC}_j$}
            \STATE $\mathsf{PBSim}(r_i, r_j) \leftarrow C(x_i,x_j)$
        \ENDFOR
    \ENDIF
\ENDFOR
\STATE Apply Spectral Clustering on $\mathsf{mode}$ with $\mathsf{PBSim}$ to obtain clustering results $\mathsf{labels}$
\FORALL{$x_i \in X \setminus \mathsf{mode}$}
    \STATE $r(x_i)\leftarrow$ the mode of $x_i$
    \STATE $\mathsf{labels}(x_i) \leftarrow \mathsf{labels}(r(x_i))$
\ENDFOR
\RETURN $\mathsf{labels}$
\end{algorithmic}
\end{algorithm}

\subsection{Time Complexity}
Let $n$, $d$ be the number of data points and dimensionality for a dataset. For convenience, we use $q$ to denote the number of modes detected by Algorithm~\ref{alg:mode}. The time complexity of TANGO can be analyzed as follows:
\begin{itemize}
    \item Line 1: $O(dn\log n+nk^2d)$ for calculating $A$ with KD-Tree, and $O(kn\log(kn) + kn)$ for computing density and $\mathsf{leader}$.
\item Line 2: $O(n\log n)$ for sorting data points and computing $T$ according to Theorem~\ref{thm:sol}.
\item  Line 3: $O(n)$.
\item  Lines 4-11: $O(nk \log (nk))$ according to Theorem~\ref{thm:pathbased}.
\item  Lines 12-15: $O(q^3 + n)$ for spectral clustering and label assignment.
\end{itemize}
Therefore, as $k, q \ll n$, the overall time complexity of TANGO is $O(nk^2d)$, and is dominated by the similarity matrix calculation which can be easily parallelized. The practical running times are shown in Appendix~\ref{image}.

\section{Experiment}
In this section, we first visualize the clustering results of Quick Shift, DPC, and the proposed TANGO algorithm on synthetic datasets; 
then we compare TANGO with 10 existing clustering methods on 16 real-world datasets, and also apply TANGO to image segmentation tasks to further validate its effectiveness and efficiency; finally, we analyze the effects of the hyperparameter $k$ and proposed components (See detail in Appendix~\ref{detailed experimens}). We also provide some additional discussion in Appendix~\ref{add}.

\subsection{Visualization on Synthetic Datasets}
Fig.~\ref{fig:synthetic} shows the visualization of dependency relationships and clustering results for TANGO compared with the classical Quick Shift and DPC on four toy datasets, where the red arrows (with blue cross mark) in the 3rd column indicating the breaking of dependency via typicality, and the 4th column shows the final clustering results for TANGO. It can be observed that TANGO achieves favorable clustering results by detecting modes effectively based on typicality, while Quick Shift and DPC both perform poorer due to only considering the local structure of the data. 

We also include a visualization on some highly-noisy synthetic datasets in Appendix~\ref{noise} to validate the robustness of TANGO to noise.

\begin{figure}[!htb]
    \centering
    \includegraphics[width=0.9\linewidth]{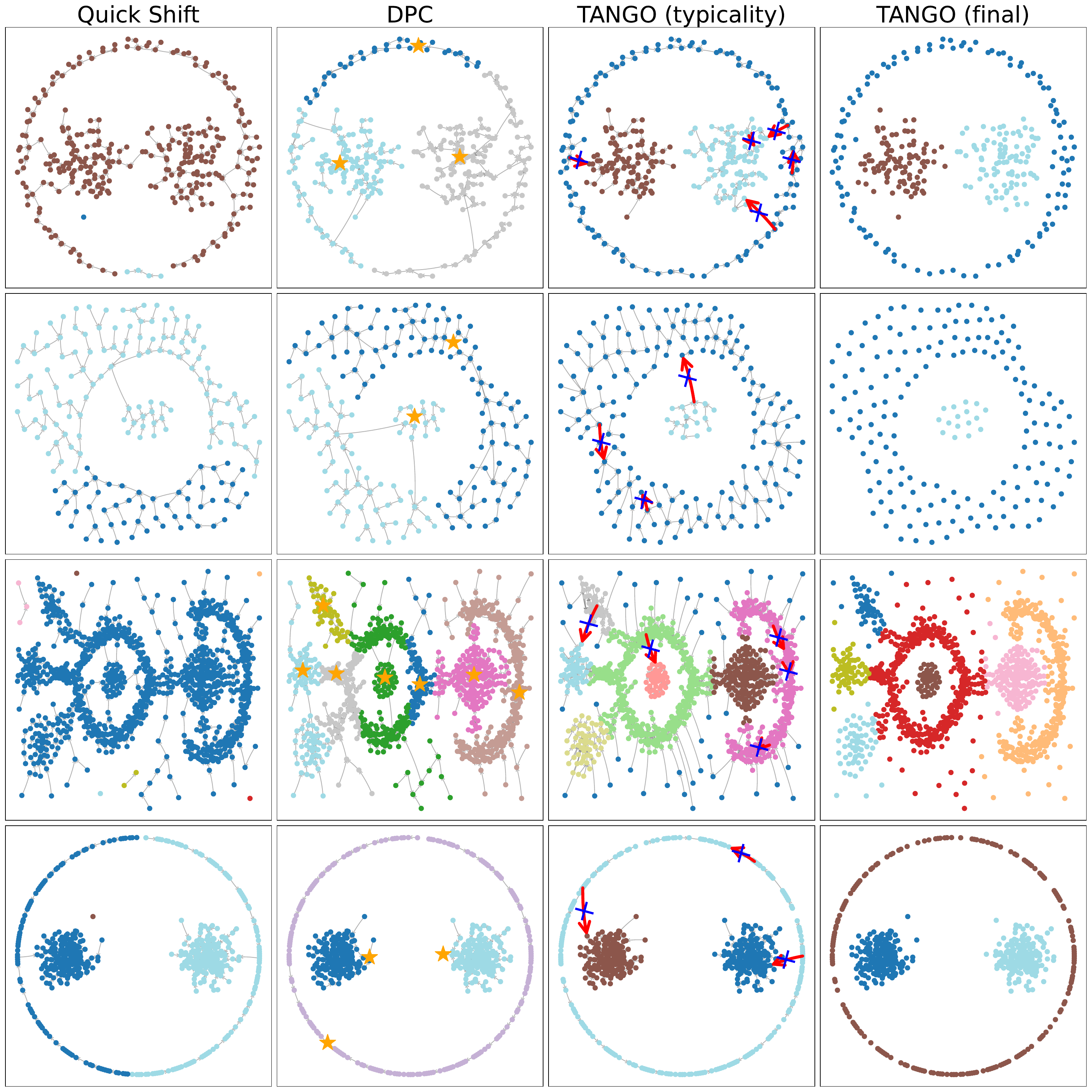}
    \caption{Results on 4 synthetic datasets. For TANGO, the hyperparameter $k$ is set to 20, 30, 70, and 30, respectively. 
    For Quick Shift and DPC, we tune their hyperparameters to achieve a reasonable number of clusters while maximizing the Adjusted Rand Index (ARI).}
    \label{fig:synthetic}
\end{figure}

\begin{table}[!htb]
\centering
\small
\begin{tabular}{rccc}
\toprule
Dataset & $n$ & $d$ & \#Clusters \\
\midrule wdbc & 569 & 30 & 2 \\
heartEW & 270 & 13 & 2 \\
segmentation & 2100 & 19 & 7 \\
semeion & 1593 & 256 & 10 \\
semeionEW & 1593 & 256 & 10 \\
ver2 & 310 & 6 & 3 \\
synthetic-control & 600 & 60 & 6 \\
waveform & 5000 & 21 & 3 \\
CongressEW & 435 & 16 & 2 \\
mfeat-zer & 2000 & 47 & 10 \\
ionosphereEW & 351 & 34 & 2 \\
banknote & 1372 & 4 & 2 \\
isolet1234 & 6238 & 617 & 26 \\
Leukemia & 72 & 3571 & 2 \\
MNIST (AE) & 10000 & 64 & 10 \\
Umist (AE) & 575 & 64 & 20 \\
\bottomrule
\end{tabular}
\caption{Summary of real-world datasets.}\label{tab:dataset}
\end{table}

\begin{figure*}[tb]
    \centering
    \includegraphics[width=\linewidth]{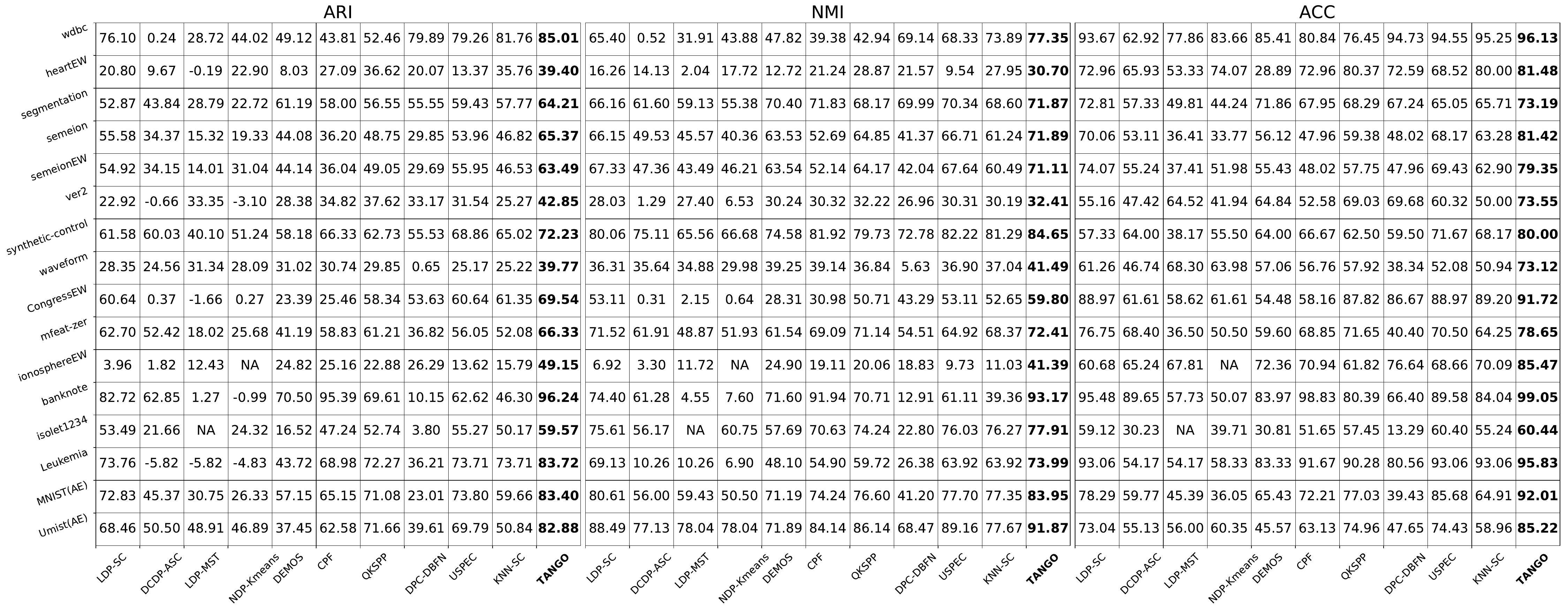}
    \caption{Results on $16$ real-world datasets (\%, bold represents the best for a dataset).}
    \label{fig:real}
\end{figure*}
\subsection{Clustering on Real-World Datasets}
We compared TANGO with 10 advanced algorithms based on density and local dependency relationships, as well as graph-cut clustering, on real-world datasets, including
LDP-SC~\cite{DBLP:journals/isci/LongGMYL22}, DCDP-ASC~\cite{DBLP:journals/tsmc/ChengHZZL22}, LDP-MST~\cite{DBLP:journals/tkde/ChengZHWY21}, NDP-Kmeans~\cite{Cheng2023KMeans}, DEMOS~\cite{DBLP:journals/tkde/GuanLCHC23}, CPF~\cite{DBLP:journals/pami/TobinZ24}, QKSPP~\cite{DBLP:conf/icml/JiangJK18}, DPC-DBFN~\cite{DBLP:journals/pr/LotfiMB20}, USPEC~\cite{DBLP:journals/tkde/HuangWWLK20}, and KNN-SC\footnote{https://scikit-learn.org/stable/index.html} (traditional spectral clustering based on $k$ nearest neighbor graph)

We evaluated these algorithms using standard clustering metrics: Adjusted Rand Index (ARI;~\cite{ARI}), Normalized Mutual Information (NMI;~\cite{NMI}), and Accuracy (ACC;~\cite{ACC}).

For fair comparison, all algorithms were tuned to optimal hyperparameters.
For TANGO, the neighborhood size $k$ was searched from 2 to 100 with a step size of 1. 
 Algorithms requiring specification of the number of clusters used the ground-truth number of clusters.
 The detailed hyperparameter settings of the other algorithms are given in Appendix~\ref{par setting}.

Experiments were conducted on 14 UCI datasets and 2 image datasets (Table~\ref{tab:dataset}). All datasets were min-max normalized. 
For image datasets MNIST and Umist, 
we used AutoEncoder (AE) to reconstruct them into 64-dimensional representations. Fig.~\ref{fig:real} presents the results of TANGO and 10 comparison algorithms on 16 real-world datasets. The detailed explanation has been shown in Appendix~\ref{real-world experiments}.

We also obtained p~$<0.05$ from Friedman's test, and the pairwise mean rank differences between TANGO and other comparison algorithms surpassed the critical difference threshold (CD~$=2.30$) in subsequent Nemenyi's test, indicating the superior performance of TANGO is not due to random chance (see Appendix~\ref{significance}).

To further validate the effectiveness and computational efficiency of TANGO, we also apply it to image segmentation using Berkeley Segmentation Dataset Benchmark and provide corresponding running times. We transform each pixel into a 5-dimensional vector, where two coordinates correspond to the spatial location of the pixel and three coordinates represent the RGB color channels. The segmentation is done by clustering this 5-dimensional dataset ($154\,401$ pixels). See Appendix~\ref{image}.

The effects of the hyperparameter $k$ have been analyzed in Appendix~\ref{parameter}, and the ablation study of TANGO has been shown in Appendix~\ref{ablation}. We also provide a discussion on the limitation about TANGO in Appendix~\ref{limit}.

\section{Conclusion and Future Work}
In this paper, we first proposed a global-view measure, i.e., typicality, to quantify the confidence for a point to be a mode, to resolve the issue of current mode-seeking methods that they require case-by-case threshold settings or human intervention to identify modes. 
We also devised an effective and efficient algorithm to calculate the typicality of points, and provide theoretical analysis about it.
Furthermore, the clustering method TANGO was designed by leveraging typicality to detect modes and form sub-clusters, and utilizing graph-cut with an improved path-based similarity. 
Extensive experiments on multiple synthetic and real-world datasets demonstrated the effectiveness and superiority of TANGO over state-of-the-art clustering algorithms.

Future research could explore methods to automatically determine the number of nearest neighbors $k$, and to investigate the effects of other types of density-ascending dependency and typicality.

\section*{Acknowledgments}
We would like to thank the anonymous reviewers for their invaluable help to improve the paper. 
This work was supported by the National Natural Science Foundation of
China under Grant numbers 61806170 and 62276218, and the Fundamental Research Funds for the Central Universities under Grant numbers 2682022ZTPY082 and 2682023ZTPY027.

\section*{Impact Statement}
This paper presents work whose goal is to advance the field of Machine Learning. There are many potential societal consequences of our work, none which we feel must be specifically highlighted here.

\bibliography{example_paper}
\bibliographystyle{icml2025}

\newpage
\appendix
\onecolumn

\section{Proofs of Theorems}

\subsection{Proof of Theorem~\ref{thm:sol}}\label{proof1}

 \begin{proof}
     It suffices to prove that $I-B^\intercal$ is invertible for $B$ defined by Eq.~\ref{eq:b}. To demonstrate this, we show the existence of a full-rank square matrix $R$ such that $R(I-B^\intercal )R^\intercal$ is also full-rank.

     In fact, by sorting data points $x_1,\ldots,x_n$ in ascending order of density $\rho$, the rows and columns of dependency matrix $B$ will be reordered into a new matrix $M$. $M$ can be equivalently obtained by performing row transformations given by a full-rank square matrix $R$, and column transformations given by $R^\intercal$ (i.e., $M=RBR^\intercal$), where $R=r_1 \cdot r_2 \cdot \ldots \cdot r_t$, and each $r_i$ refers to an elementary row permutation matrix swapping two rows.

     Note that each $r_i$ is an elementary row permutation matrix swapping two rows. It holds that $r_i \cdot r_i = I$ (swapping the same two rows of the identity matrix twice restores the identity matrix) and $r_i=r_i^\intercal$ (swapping two rows of the identity matrix yields the same matrix as swapping its corresponding two columns), thus $r_i \cdot r_i^\intercal = I$, and it is easy to verify $RR^\intercal=I$.

     Therefore, we have $R(I-B^\intercal )R^\intercal=RR^\intercal-RB^\intercal R^\intercal=I-M^\intercal$. 
     Note that $M$ represents the reordered dependency matrix $B$ based on sorting data points in ascending order of density, thus $M_{ii}=0$ since each $x_i$ cannot be its own leader, and for $j>i$, $M_{ji}=0$ because $\rho(x_i) \leq \rho(x_j)$ implying $x_i$ cannot be the leader of $x_j$. Therefore, $M$ is an upper triangular matrix with zeros on the diagonal, and $R(I-B^\intercal )R^\intercal=I-M^\intercal$ is a lower triangular matrix with ones on the diagonal, which is obviously full-rank.

      Thus, $I-B^\intercal$ is full-rank and invertible, and the equation $T=B^\intercal T+\rho$ has a unique solution $T=(I-B^\intercal )^{-1}\rho$.
 \end{proof}

\subsection{Proof of Theorem~\ref{thm:pathbased}}\label{proof3}

\begin{proof}
We first construct an undirected weighted multigraph $M=(V,E,W)$ according to Definition~\ref{dfn:PBSim}, where: 
\begin{itemize}
    \item $V$ is the set of vertices, with each vertex $V_i$ representing a sub-cluster $G_i$;
    \item $E$ is the set of undirected edges, where an edge $e=(G_i,G_j,p,q) \in E$ if there is an edge in $G$ connecting data points $p$ and $q$ from two different sub-clusters $G_i$ and $G_j$ respectively;
    \item $W$ represents the weight of each edge in $E$. For any $e=(G_i,G_j,p,q) \in E$ connecting data points $p \in G_i$ and $q \in G_j$, we have $W(e)=C(p,q)$.
\end{itemize}
Note that $C(p,q)=1$ for any $p,q$ from the same sub-cluster.
Thus, the path-based similarity $\mathsf{PBSim}(r_i,r_j)$ between any two sub-clusters $G_i$ and $G_j$ is always determined by edges in $E$, and all data points from the same sub-cluster $G_t$ can be seen as a whole (vertex $V_t$ in $V$), and $\mathsf{PBSim}(r_i,r_j)$ can be calculated by conducting a variant of Kruskal's algorithm~\cite{DBLP:conf/nips/FischerRB03} on $M$: gradually adding edges from $E$ to a new graph $M'=(V,\varnothing,\varnothing)$ in descending order of $W$, and setting $\mathsf{PBSim}(r_i,r_j)$ as $W(e)$ iff $e \in E$ the first added edge that makes $G_i$ and $G_j$ belong to the same connected component of $M'$.

The above time complexity is $O(|E| \log |E| + |V|^2)$~\cite{DBLP:conf/nips/FischerRB03}. As $|E| \leq nk$ and $|V| \ll n$, the overall time complexity will be no more than $O(nk \log (nk))$.
\end{proof}

\section{Tail behavior of Typicality}\label{tail proof}
We can also model the typicality as a solution of distributional identity, and analyze the tail behavior of typicality from this perspective. Let us rewrite Eq.~\eqref{eq:T} as the following distributional identity:
\begin{equation}\label{eq:T_Dis}
    T \stackrel{D}{=} \sum_{j=1}^N B_j T_j + \rho
\end{equation}
where $N$ is the in-degree of points in dependency graph (number of points that consider a randomly chosen point in the dataset as leader), and it is an integer-valued random variable; $\stackrel{D}{=}$ denotes equality in distribution; $T_j$s are independent (since the dependency is hierarchical) and distributed as $T$; $B_j$s are independent and distributed as some random variable $B$; $T_j$s, $B_j$s, $N$ and $\rho$ are independent.

As the in-degree distribution in a graph nearly follows the power law behavior~\cite{adamic2000power}, we can model the in-degree in dependency graph as $N=N(X)$, where $X$ is regularly varying with index $\alpha_N > 1$ and $N(x)$ is the number of Poisson arrivals during the time interval $[0,x]$, when the arrival rate is $1$. Thus, $N(X)$ is also regularly varying asymptotically identical to $X$~\cite{DBLP:journals/im/LitvakSV07}:
\begin{equation}
P(N(X) > x) \sim x^{-\alpha_N} L_N(x) \text{ as } x \to \infty.
\end{equation}
where $L_N(x)$ is a slowly varying function.

As Theorem~\ref{thm:sol} has already shown the existence and uniqueness of the nontrivial solution of Eq.~\eqref{eq:T_Dis}, we then have the following conclusion derived from~\cite{Volkovich_Litvak_2010}:
\begin{theorem}\label{tail}
    If $B < 1$ \ and \ $\mathrm{P}(\rho > x) = o(\mathrm{P}(N > x)) \ as \ x \to \infty$, then
    \begin{equation}
        \mathrm{P}(T > x) \sim C_N x^{-\alpha_N} L_N(x) \text{ as } x \to \infty
    \end{equation}
    where $C_N=(\mathrm{E}(B))^{\alpha_N}[1 - \mathrm{E}(N)\mathrm{E}(B^{\alpha_N})]^{-1}$
\end{theorem}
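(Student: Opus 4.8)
The plan is to view the distributional identity Eq.~\eqref{eq:T_Dis} as a stochastic fixed-point equation on a weighted branching tree and to invoke the tail asymptotics established for such equations by \cite{Volkovich_Litvak_2010}; the substance of the proof is then the verification that its hypotheses hold for our $B$, $N$ and $\rho$. First I would record existence and uniqueness of a nontrivial solution: on the finite dataset this is Theorem~\ref{thm:sol}, and the distributional identity is its idealisation on the sub-critical infinite tree, the sub-criticality being guaranteed by $B<1$ a.s.\ together with the integrability condition $\mathrm{E}(N)\mathrm{E}(B^{\alpha_N})<1$ implicit in the finiteness and positivity of $C_N$. Next I would establish that the branching factor has a regularly varying tail of the required index: modelling the in-degree as $N=N(X)$ with $X$ regularly varying of index $\alpha_N>1$ and $N(x)$ a unit-rate Poisson count on $[0,x]$ (the power-law shape being the empirically motivated assumption of \cite{adamic2000power}), the transfer result of \cite{DBLP:journals/im/LitvakSV07} gives $\mathrm{P}(N>x)\sim x^{-\alpha_N}L_N(x)$ with $L_N$ slowly varying. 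Finally, the hypothesis $\mathrm{P}(\rho>x)=o(\mathrm{P}(N>x))$ supplies the tail dominance of the branching term over the additive ``immediate'' term $\rho$, and $B<1$ a.s.\ is assumed outright.

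With the hypotheses in hand, the mechanism behind the conclusion is the ``one big jump'' principle for the recursion. For large $x$, I would argue that $\{T>x\}$ is realised, to leading order, in exactly one of two disjoint ways: (i) one descendant copy $T_j$ is itself large, which by Breiman's lemma (using boundedness of $B$) contributes $\mathrm{E}(N)\,\mathrm{E}(B^{\alpha_N})\,\mathrm{P}(T>x)$ and is the self-referential term; or (ii) some node has an anomalously large in-degree, so that $\sum_{j=1}^N B_jT_j\approx N\,\mathrm{E}(B)\,\mathrm{E}(T)$ is large, contributing a term proportional to $\mathrm{P}(N>x)$ by regular variation of random sums; the contribution of a large $\rho$ is negligible by the dominance hypothesis. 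Iterating the recursion to the explicit weighted-branching series $T=\sum_{v}\Pi_v\,\rho_v$ (a.s.\ convergent because $B<1$ and the tree is sub-critical) and summing the ``one big node'' contributions over all tree depths $\ell\ge0$ produces the geometric factor $\sum_{\ell\ge0}\big(\mathrm{E}(N)\mathrm{E}(B^{\alpha_N})\big)^{\ell}=\big(1-\mathrm{E}(N)\mathrm{E}(B^{\alpha_N})\big)^{-1}$, and matching the two sides of Eq.~\eqref{eq:T_Dis} asymptotically fixes the remaining multiplicative constant, yielding $\mathrm{P}(T>x)\sim C_N\,x^{-\alpha_N}L_N(x)$ with $C_N$ as stated. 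To make this rigorous one either cites \cite{Volkovich_Litvak_2010} directly once the hypotheses are checked, or reproduces its argument in this special (forest) case.

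The main obstacle I expect is twofold. First, pinning down and proving the moment bounds $\mathrm{E}(T^{s})<\infty$ for every $s<\alpha_N$: these are exactly what license Breiman's lemma and the regularly varying random-sum estimate applied to $\sum_{j=1}^N B_jT_j$, and they rest delicately on $B<1$ and on the sub-criticality $\mathrm{E}(N)\mathrm{E}(B^{s})<1$ for such $s$, established by induction along the recursion. Second, the uniformity needed to upgrade the ``one big jump'' decomposition into a genuine asymptotic equivalence: a truncation/smoothing argument showing that joint large deviations (two large descendant copies, or a large copy together with a large in-degree) are of strictly smaller order, and that the infinite sum over tree nodes may be exchanged with the $x\to\infty$ limit by dominated convergence. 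Once $\rho$ is confirmed to have a strictly lighter tail than $N$, neither step needs an idea beyond those already available in \cite{Volkovich_Litvak_2010}, so the cleanest write-up reduces the claim to that result after the hypothesis check.
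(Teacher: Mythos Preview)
Your proposal is essentially sound and, like the paper, reduces the claim to \cite{Volkovich_Litvak_2010} after checking that the hypotheses $B<1$, $\mathrm{P}(\rho>x)=o(\mathrm{P}(N>x))$, and regular variation of $N$ hold. In that sense the two arguments coincide at the top level.

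Where you diverge is in the sketch of the \emph{mechanism}. You frame the tail asymptotics via the probabilistic ``one big jump'' decomposition together with Breiman's lemma and a geometric summation over tree depths. The paper, following \cite{Volkovich_Litvak_2010} itself, instead names the analytic route: Laplace--Stieltjes transforms and a Tauberian theorem, where one studies the behaviour of the transform of $T$ near the origin (encoding the recursion multiplicatively), extracts a singular expansion whose leading non-integer power reflects the index $\alpha_N$, and then inverts via Karamata/Tauberian arguments to recover the power-law tail with the stated constant. Your sample-path approach is a legitimate alternative (closer in spirit to the Jelenkovi\'c--Olvera-Cravioto line of work on weighted branching recursions) and is arguably more transparent about where $C_N$ comes from; the transform method is more mechanical but sidesteps the moment and truncation bookkeeping you flag as obstacles. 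Either route lands on the same conclusion, so if you intend to \emph{cite} \cite{Volkovich_Litvak_2010} rather than reprove it, be aware that the argument you would be invoking is the transform one, not the one-big-jump heuristic you have sketched.
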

Theorem~\ref{tail} can be proven in a similar manner as in~\cite{Volkovich_Litvak_2010}, by using Laplace–Stieltjes transform and Tauberian theorem. Since $B$ and $\rho$ only range from $0$ to $1$, the assumptions in Theorem~\ref{tail} are trivially satisfied in our scenario, indicating that the tail of typicality is strictly following the power law behavior with the same exponent as $N$.

\section{Detailed Experimental Results and Analyses}\label{detailed experimens}

\subsection{Hyperpamameter Settings of the Algorithms}\label{par setting}
The neighborhood size $k$ was searched from 2 to 100 with a step size of 1 in TANGO, LDP-SC, DEMOS, QKSPP, DPC-DBFN, and KNN-SC. Noise ratio parameters in DCDP-ASC and NDP-Kmeans were searched from 0 to 0.2 with a step size of 0.001.
The minimum proportion of points that each cluster must contain in LDP-MST was fixed at 0.018 as described in the article. 
DCDP-ASC and DEMOS required manual selection of core points on decision graphs, and we used the best-performing selection for each parameter setting.
In QKSPP, 
density fluctuation parameter $\beta$ was searched from 0 to 1 with a step size of 0.1 for each neighborhood size $k$. CPF searched for $k$ and $\beta$ parameters among preset combinations provided in the source code. 
USPEC searched for the number of representative points with a step size of 100 from 100 to 1000 (or the nearest multiple of 100 below the maximum number of data points), and $k$ was searched for all possible values under each representative point count.

\begin{table*}[!h]
\centering
\resizebox{0.9\textwidth}{!}{%
\begin{tabular}{rrccccccccccc}
\toprule 
Dataset & Metric & LDP-SC & DCDP-ASC & LDP-MST & NDP-Kmeans & DEMOS & CPF & QKSPP & DPC-DBFN & USPEC & KNN-SC & TANGO \\
\midrule 
\multirow{4}{*}{wdbc} & ARI & 76.1 & 0.24 & 28.72 & 44.02 & 49.12 & 43.81 & 52.46 & 79.89 & 79.26 & 81.76 & \textbf{85.01} \\
 & NMI & 65.4 & 0.52 & 31.91 & 43.88 & 47.82 & 39.38 & 42.94 & 69.14 & 68.33 & 73.89 & \textbf{77.35} \\
 & ACC & 93.67 & 62.92 & 77.86 & 83.66 & 85.41 & 80.84 & 76.45 & 94.73 & 94.55 & 95.25 & \textbf{96.13} \\
 & Par. & 4 & 0 & / & 0 & 2 & 12 / 0.1 & 13 / 0.7 & 198 & 13 / 300 & 9 & 86 \\
\midrule 
\multirow{4}{*}{heartEW} & ARI & 20.8 & 9.67 & -0.19 & 22.9 & 8.03 & 27.09 & 36.62 & 20.07 & 13.37 & 35.76 & \textbf{39.4} \\
 & NMI & 16.26 & 14.13 & 2.04 & 17.72 & 12.72 & 21.24 & 28.87 & 21.57 & 9.54 & 27.95 & \textbf{30.7} \\
 & ACC & 72.96 & 65.93 & 53.33 & 74.07 & 28.89 & 72.96 & 80.37 & 72.59 & 68.52 & 80 & \textbf{81.48} \\
 & Par. & 9 & 0 & / & 0 & 16 & 12 / 0.1 & 30 / 0.4 & 13 & 4 / 100 & 42 & 35 \\
\midrule 
\multirow{4}{*}{segmentation} & ARI & 52.87 & 43.84 & 28.79 & 22.72 & 61.19 & 58 & 56.55 & 55.55 & 59.43 & 57.77 & \textbf{64.21} \\
 & NMI & 66.16 & 61.6 & 59.13 & 55.38 & 70.4 & 71.83 & 68.17 & 69.99 & 70.34 & 68.6 & \textbf{71.87} \\
 & ACC & 72.81 & 57.33 & 49.81 & 44.24 & 71.86 & 67.95 & 68.29 & 67.24 & 65.05 & 65.71 & \textbf{73.19} \\
 & Par. & 10 & 0 & / & 0.2 & 46 & 64 / 0.3 & 50 / 0.9 & 5 & 28 / 1000 & 40 & 98 \\
\midrule 
\multirow{4}{*}{semeion} & ARI & 55.58 & 34.37 & 15.32 & 19.33 & 44.08 & 36.2 & 48.75 & 29.85 & 53.96 & 46.82 & \textbf{65.37} \\
 & NMI & 66.15 & 49.53 & 45.57 & 40.36 & 63.53 & 52.69 & 64.85 & 41.37 & 66.71 & 61.24 & \textbf{71.89} \\
 & ACC & 70.06 & 53.11 & 36.41 & 33.77 & 56.12 & 47.96 & 59.38 & 48.02 & 68.17 & 63.28 & \textbf{81.42} \\
 & Par. & 18 & 0.2 & / & 0 & 20 & 14 / 0.1 & 13 / 0.0 & 3 & 6 / 1000 & 72 & 19 \\
\midrule 
\multirow{4}{*}{semeionEW} & ARI & 54.92 & 34.15 & 14.01 & 31.04 & 44.14 & 36.04 & 49.05 & 29.69 & 55.95 & 46.53 & \textbf{63.49} \\
 & NMI & 67.33 & 47.36 & 43.49 & 46.21 & 63.54 & 52.14 & 64.17 & 42.04 & 67.64 & 60.49 & \textbf{71.11} \\
 & ACC & 74.07 & 55.24 & 37.41 & 51.98 & 55.43 & 48.02 & 57.75 & 47.96 & 69.43 & 62.9 & \textbf{79.35} \\
 & Par. & 4 & 0.2 & / & 0 & 20 & 14 / 0.1 & 13 / 0.0 & 3 & 3 / 1000 & 75 & 21 \\
\midrule 
\multirow{4}{*}{ver2} & ARI & 22.92 & -0.66 & 33.35 & -3.1 & 28.38 & 34.82 & 37.62 & 33.17 & 31.54 & 25.27 & \textbf{42.85} \\
 & NMI & 28.03 & 1.29 & 27.4 & 6.53 & 30.24 & 30.32 & 32.22 & 26.96 & 30.31 & 30.19 & \textbf{32.41} \\
 & ACC & 55.16 & 47.42 & 64.52 & 41.94 & 64.84 & 52.58 & 69.03 & 69.68 & 60.32 & 50 & \textbf{73.55} \\
 & Par. & 5 & 0 & / & 0 & 18 & 14 / 1.0 & 20 / 0.1 & 67 & 5 / 100 & 190 & 13 \\
\midrule 
\multirow{4}{*}{synthetic-control} & ARI & 61.58 & 60.03 & 40.1 & 51.24 & 58.18 & 66.33 & 62.73 & 55.53 & 68.86 & 65.02 & \textbf{72.23} \\
 & NMI & 80.06 & 75.11 & 65.56 & 66.68 & 74.58 & 81.92 & 79.73 & 72.78 & 82.22 & 81.29 & \textbf{84.65} \\
 & ACC & 57.33 & 64 & 38.17 & 55.5 & 64 & 66.67 & 62.5 & 59.5 & 71.67 & 68.17 & \textbf{80} \\
 & Par. & 7 & 0 & / & 0.2 & 24 & 82 / 0.1 & 39 / 0.9 & 13 & 5 / 200 & 24 & 18 \\
\midrule 
\multirow{4}{*}{waveform} & ARI & 28.35 & 24.56 & 31.34 & 28.09 & 31.02 & 30.74 & 29.85 & 0.65 & 25.17 & 25.22 & \textbf{39.77} \\
 & NMI & 36.31 & 35.64 & 34.88 & 29.98 & 39.25 & 39.14 & 36.84 & 5.63 & 36.9 & 37.04 & \textbf{41.49} \\
 & ACC & 61.26 & 46.74 & 68.3 & 63.98 & 57.06 & 56.76 & 57.92 & 38.34 & 52.08 & 50.94 & \textbf{73.12} \\
 & Par. & 27 & 0.2 & / & 0 & 71 & 65 / 0.8 & 38 / 0.5 & 2 & 11 / 1000 & 14 & 50 \\
\midrule 
\multirow{4}{*}{CongressEW} & ARI & 60.64 & 0.37 & -1.66 & 0.27 & 23.39 & 25.46 & 58.34 & 53.63 & 60.64 & 61.35 & \textbf{69.54} \\
 & NMI & 53.11 & 0.31 & 2.15 & 0.64 & 28.31 & 30.98 & 50.71 & 43.29 & 53.11 & 52.65 & \textbf{59.8} \\
 & ACC & 88.97 & 61.61 & 58.62 & 61.61 & 54.48 & 58.16 & 87.82 & 86.67 & 88.97 & 89.2 & \textbf{91.72} \\
 & Par. & 8 & 0 & / & 0 & 26 & 21 / 0.1 & 41 / 0.7 & 2 & 6 / 300 & 16 & 9 \\
\midrule 
\multirow{4}{*}{mfeat-zer} & ARI & 62.7 & 52.42 & 18.02 & 25.68 & 41.19 & 58.83 & 61.21 & 36.82 & 56.05 & 52.08 & \textbf{66.33} \\
 & NMI & 71.52 & 61.91 & 48.87 & 51.93 & 61.54 & 69.09 & 71.14 & 54.51 & 64.92 & 68.37 & \textbf{72.41} \\
 & ACC & 76.75 & 68.4 & 36.5 & 50.5 & 59.6 & 68.85 & 71.65 & 40.4 & 70.5 & 64.25 & \textbf{78.65} \\
 & Par. & 10 & 0 & / & 0 & 8 & 32 / 0.8 & 27 / 0.1 & 5 & 20 / 1000 & 4 & 40 \\
\midrule 
\multirow{4}{*}{ionosphereEW} & ARI & 3.96 & 1.82 & 12.43 & / & 24.82 & 25.16 & 22.88 & 26.29 & 13.62 & 15.79 & \textbf{49.15} \\
 & NMI & 6.92 & 3.3 & 11.72 & / & 24.9 & 19.11 & 20.06 & 18.83 & 9.73 & 11.03 & \textbf{41.39} \\
 & ACC & 60.68 & 65.24 & 67.81 & / & 72.36 & 70.94 & 61.82 & 76.64 & 68.66 & 70.09 & \textbf{85.47} \\
 & Par. & 3 & 0 & / & / & 4 & 50 / 0.1 & 50 / 0.7 & 37 & 8 / 200 & 6 & 15 \\
\midrule 
\multirow{4}{*}{banknote} & ARI & 82.72 & 62.85 & 1.27 & -0.99 & 70.5 & 95.39 & 69.61 & 10.15 & 62.62 & 46.3 & \textbf{96.53} \\
 & NMI & 74.4 & 61.28 & 4.55 & 7.6 & 71.6 & 91.94 & 70.71 & 12.91 & 61.11 & 39.36 & \textbf{93.59} \\
 & ACC & 95.48 & 89.65 & 57.73 & 50.07 & 83.97 & 98.83 & 80.39 & 66.4 & 89.58 & 84.04 & \textbf{99.13} \\
 & Par. & 13 & 0 & / & 0 & 4 & 65 / 0.3 & 29 / 0.8 & 83 & 10 / 1000 & 13 & 66 \\
\midrule 
\multirow{4}{*}{isolet1234} & ARI & 53.49 & 21.66 & / & 24.32 & 16.52 & 47.24 & 52.74 & 3.8 & 55.27 & 50.17 & \textbf{59.57} \\
 & NMI & 75.61 & 56.17 & / & 60.75 & 57.69 & 70.63 & 74.24 & 22.8 & 76.03 & 76.27 & \textbf{77.91} \\
 & ACC & 59.12 & 30.23 & / & 39.71 & 30.81 & 51.65 & 57.45 & 13.29 & 60.4 & 55.24 & \textbf{60.44} \\
 & Par. & 4 & 0 & / & 0 & 9 & 21 / 0.4 & 28 / 0.8 & 23 & 18 / 1000 & 26 & 48 \\
\midrule 
\multirow{4}{*}{Leukemia} & ARI & 73.76 & -5.82 & -5.82 & -4.83 & 43.72 & 68.98 & 72.27 & 36.21 & 73.71 & 73.71 & \textbf{83.72} \\
 & NMI & 69.13 & 10.26 & 10.26 & 6.9 & 48.1 & 54.9 & 59.72 & 26.38 & 63.92 & 63.92 & \textbf{73.99} \\
 & ACC & 93.06 & 54.17 & 54.17 & 58.33 & 83.33 & 91.67 & 90.28 & 80.56 & 93.06 & 93.06 & \textbf{95.83} \\
 & Par. & 6 & 0 & / & 0 & 2 & 12 / 0.1 & 12 / 0.0 & 20 & 6 / 70 & 6 & 10 \\
\midrule 
\multirow{4}{*}{MNIST(AE)} & ARI & 72.83 & 45.37 & 30.75 & 26.33 & 57.15 & 65.15 & 71.08 & 23.01 & 73.8 & 59.66 & \textbf{83.4} \\
 & NMI & 80.61 & 56 & 59.43 & 50.5 & 71.19 & 74.24 & 76.6 & 41.2 & 77.7 & 77.35 & \textbf{83.95} \\
 & ACC & 78.29 & 59.77 & 45.39 & 36.05 & 65.43 & 72.21 & 77.03 & 39.43 & 85.68 & 64.91 & \textbf{92.01} \\
 & Par. & 4 & 0.2 & / & 0 & 100 & 21 / 0.4 & 22 / 0.9 & 2 & 3 / 1000 & 9 & 33 \\
\midrule 
\multirow{4}{*}{Umist(AE)} & ARI & 68.46 & 50.5 & 48.91 & 46.89 & 37.45 & 62.58 & 71.66 & 39.61 & 69.79 & 50.84 & \textbf{82.88} \\
 & NMI & 88.49 & 77.13 & 78.04 & 78.04 & 71.89 & 84.14 & 86.14 & 68.47 & 89.16 & 77.67 & \textbf{91.87} \\
 & ACC & 73.04 & 55.13 & 56 & 60.35 & 45.57 & 63.13 & 74.96 & 47.65 & 74.43 & 58.96 & \textbf{85.22} \\
 & Par. & 7 & 0.2 & / & 0.2 & 24 & 10 / 0.5 & 12 / 0.1 & 5 & 3 / 300 & 8 & 6 \\
\bottomrule
\end{tabular}
}
\caption{Results on $16$ real-world datasets (\%, bold represents the best for a dataset, ``Par." represents parameter settings).}\label{tab:comparison}
\end{table*}
\subsection{Clustering on Real-world Datasets}\label{real-world experiments}
Fig.~\ref{fig:real} and Table~\ref{tab:comparison} show the best experimental results and corresponding hyperparameter settings of TANGO and comparison algorithms on 16 real-world datasets. Specifically, LDP-MST on isolet1234 and NDP-Kmeans on ionosphereEW did not produce results due to the insufficient number of local density peaks compared to the target number of clusters. 
TANGO demonstrates superior performance across all datasets, by integrating local and global features of the data and employing a graph-cut scheme. 

Especially on datasets such as banknote, Leukemia, and MNIST (AE), TANGO outperforms other algorithms by a significant margin in various metrics. 
Algorithms based on local density peaks such as LDP-SC, DCDP-ASC, LDP-MST, NDP-Kmeans, and DEMOS performed poorly on multiple datasets, particularly DCDP-ASC and LDP-MST, which often yielded clustering results close to random division. This demonstrates the challenge of achieving satisfactory clustering results using only local data features.
USPEC demonstrates strong clustering ability, ranking second after TANGO on multiple datasets, but it notably underperforms on datasets such as semeion and ver2 compared to TANGO. Moreover, USPEC requires extensive parameter tuning, limiting its practical utility.

 Mode-seeking algorithms like CPF and QKSPP also exhibit significantly weaker performance compared to TANGO, highlighting the effectiveness of TANGO for detecting modes through typicality and conducting graph-cut. 

Fig.~\ref{fig:illu_isolet} illustrates breaking in density-ascending dependency identified by TANGO on the isolet1234 dataset (visualized in 2D using t-SNE for dimensionality reduction). 
The ability of TANGO to detect modes more effectively contributes to improved final clustering results.

\begin{figure}[tb]
    \centering
    \includegraphics[width=0.9\linewidth]{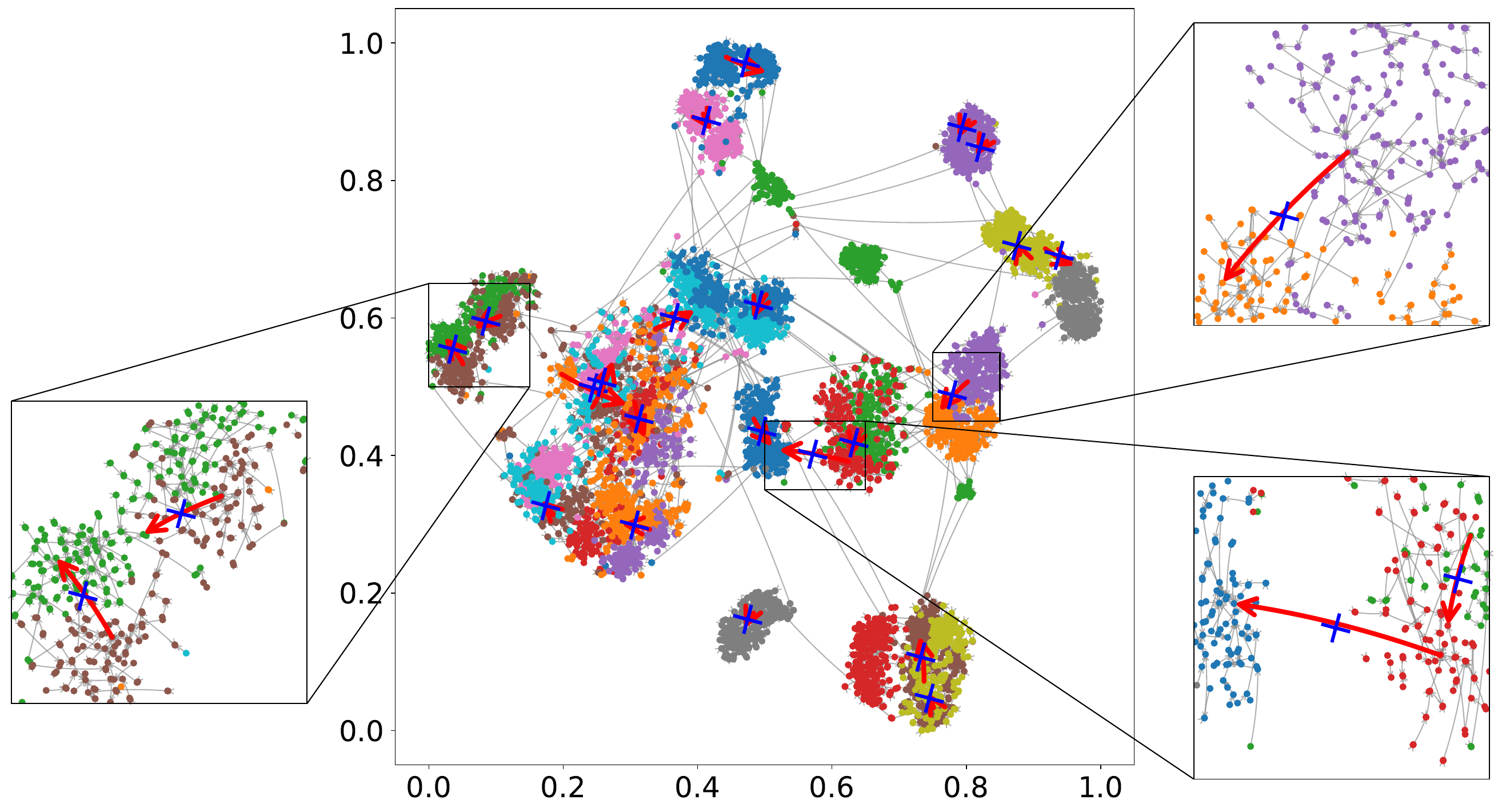}
    \caption{Illustration of TANGO breaking some dependencies based on typicality for the isolet1234 dataset (visualized in 2D by t-SNE).}
    \label{fig:illu_isolet}
\end{figure}

\subsection{Statistical Significance Test Results}\label{significance}

\begin{figure}[tb]
    \centering
    \includegraphics[width=0.5\linewidth]{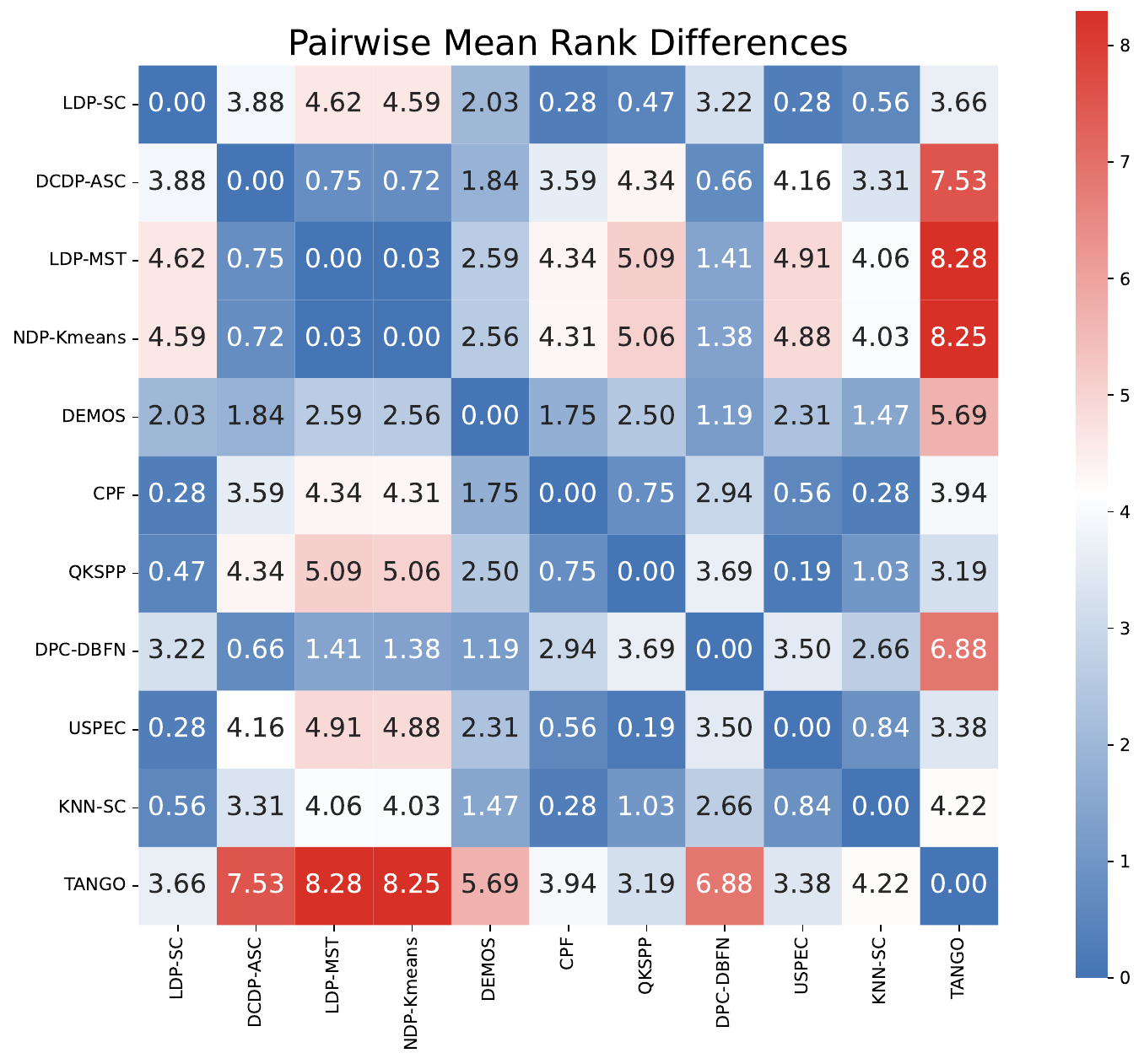}
    \caption{Statistical significance test.}
    \label{fig:Statistical significance test}
\end{figure}
We first employ Friedman's test in experimental results in terms of ARI to determine if there exhibits significant diffrences among the performance of all algorithms. After confirming that, the Nemenyi's test is conducted as a post-hoc test to identify statistical significant diffrences between pairs of algorithms.

We have obtained Friedman Statistic~$=96.4$ and p~$=2.9e-16<0.05$ from Friedman's test, indicating significant differences in performance among the algorithms. 

In subsequent Nemenyi's test, the critical difference threshold CD~$=2.30$ is calculated by $C D=q_{\alpha} \times \sqrt{\frac{K(K+1)}{6 N}}$, where $K$ indicates the number of algorithms and $N$ the number of datasets, and $q_{\alpha} = q_{0.05} = 1.96$. The pairwise differences between algorithms are shown in Fig.~\ref{fig:Statistical significance test} in the form of heatmap. Each cell in Fig.~\ref{fig:Statistical significance test} represents the difference between corresponding pair of algorithms, and the larger (smaller) the value, the closer it is to red (blue). It can be observed that the differences between TANGO and all the other algorithms not only surpass CD, but also significantly exceed the extent of differences among the other algorithms.

\subsection{Image Segmentation and Running Times}\label{image}

Fig~\ref{fig:image segmentation} shows the segmentation results of different clustering methods on $6$ images from Berkeley Segmentation Dataset Benchmark, as well as corresponding running times (see the number above each image), where each image is a dataset containing $154\,401$ samples (each sample refers to a pixel). We use image segmentation as an extension to test the efficiency of TANGO on larger datasets, and also as a preliminary result (without fine-tuning the parameters) to demonstrate its promising application to other tasks. 
The performance of TANGO varied for different images, with overall better
performance on several images (row 1, row 2, and row 5) but some flaws in some areas of other images. Note that in row 3, although the cloth and the hand blend for TANGO, it is the only
method that successfully segments features in the face (mouth and eyes).

For TANGO, we also show the running time of the similarity matrix calculation (parallelized with 20 threads) in parentheses, which is the main cost of the overall algorithm and can be easily parallelized. It has also been shown that the remaining part of TANGO is highly efficient (taking no more than $4$ seconds across each image), which aligns with the theorems about efficiency.

The experimental environment is: Windows 11, Python 3.11, CPU i7-13700KF and 32GB RAM.

\begin{figure*}[!tb]
    \centering
    \includegraphics[width=\linewidth]{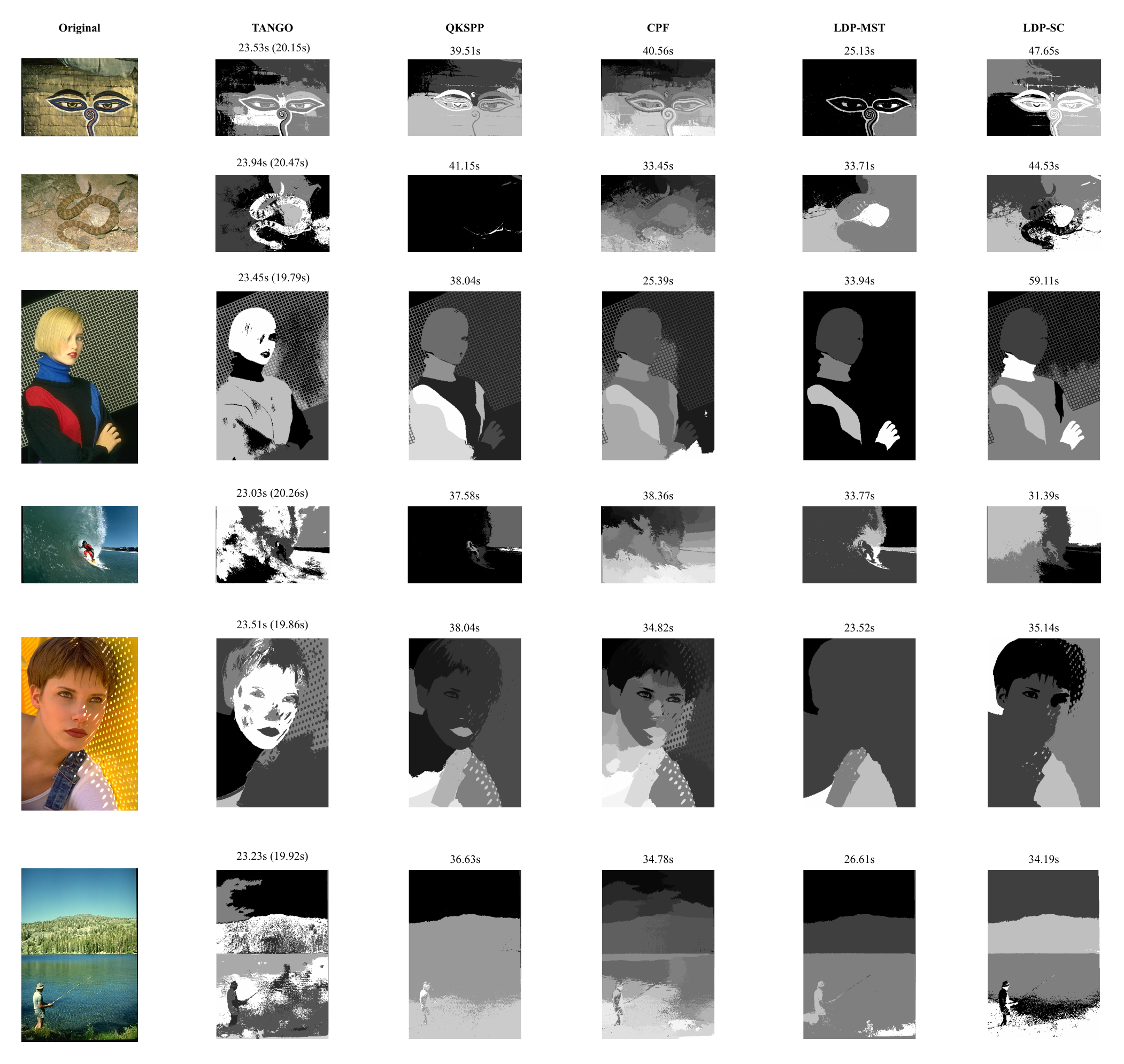}
    \caption{Results on Image Segmentation. We fixed $k=50$ in TANGO and LDP-SC; $k=300$ and $\beta=0.9$ in Quick Shift++ and CPF; target number of clusters was fixed at $5$ in TANGO, LDP-MST and LDP-SC.}
    \label{fig:image segmentation}
\end{figure*}

\subsection{Effects of the Hyperparameter k}\label{parameter}
The proposed TANGO algorithm has only one hyperparameter, which is the number of nearest neighbors $k$. Fig.~\ref{fig:robust} illustrates the performance of TANGO and three other density-based algorithms on six real-world datasets as the local neighborhood parameter $k$ varies. Here, the additional density fluctuation parameters of QKSPP and CPF are fixed at $0.7$ and $0.5$, respectively. From the figure, it is observed that the performance of TANGO tends to stabilize as $k$ increases and demonstrates strong overall effectiveness, whereas DPC-DBFN, QKSPP, and CPF exhibit significant fluctuations within the same parameter range. Moreover, in practical applications, optimizing the density fluctuation parameters of QKSPP and CPF requires fine-tuning to achieve optimal clustering results, thus TANGO demonstrates advantages in hyperparameter stability and practicability. 

For TANGO, the parameter $k$ affects the similarity measure. When $k$ is small, the similarity may not be comprehensive enough to capture the complex distribution around two points, thus increasing $k$ can lead to better performance. When $k$ is relatively large, increasing $k$ will introduce new shared nearest neighbors of two data points $x_i$ and $x_j$, which, however, will have relatively small contribution to the similarity as these neighbors have large distance to both $x_i$ and $x_j$, and similarity values become stable. In this case, the subsequent process of the algorithm will have similar results and thus the performance will also become stable. Based on this, we recommend a larger $k$ for datasets with complex distribution or having a large number of points, to more comprehensively capture the neighboring information around two points.

\begin{figure}[!tb]
    \centering
    \includegraphics[width=0.8\linewidth]{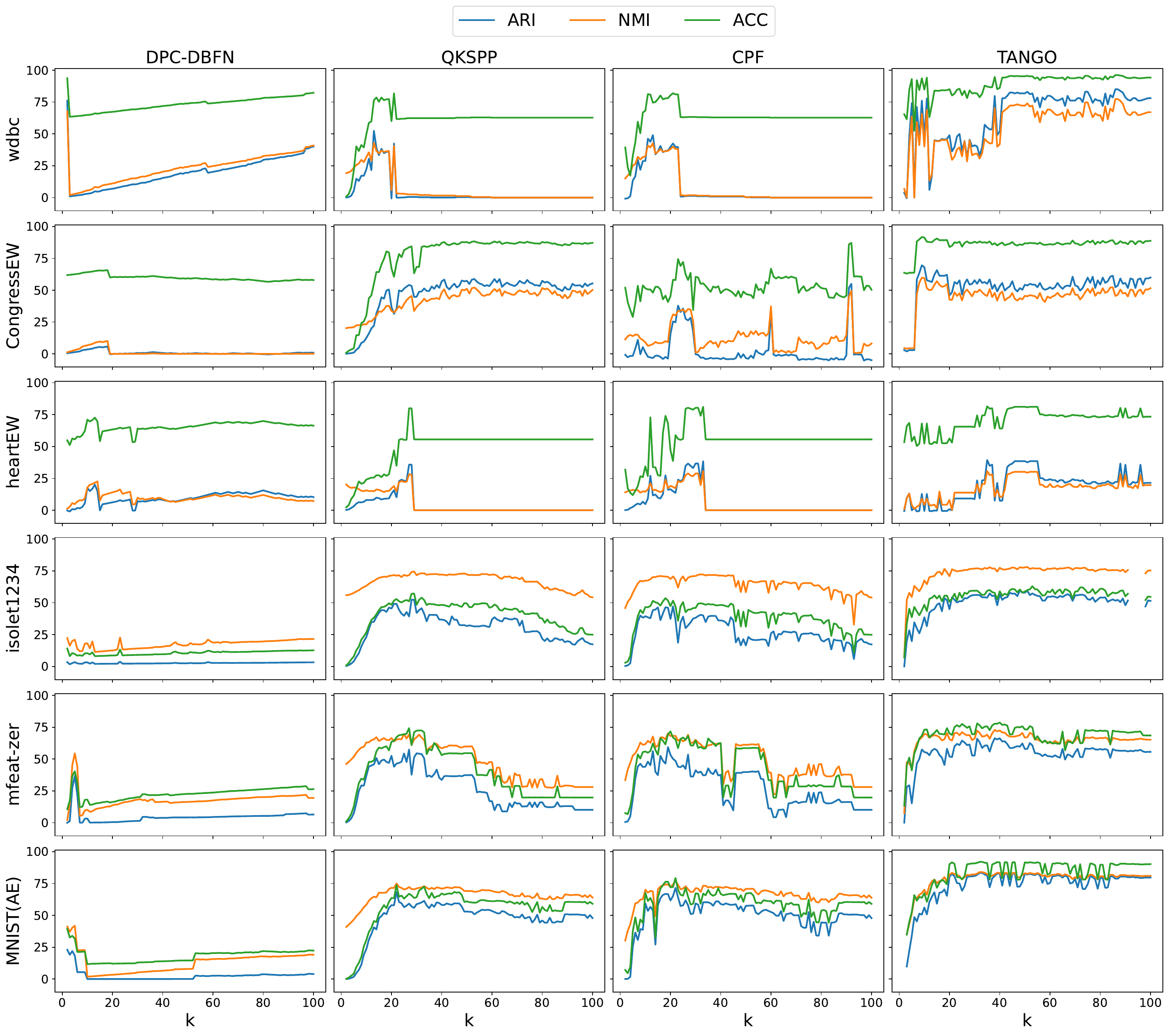}
    \caption{The impact of the number of neighbors $k$ on the performance of TANGO and other density-based algorithms.}\label{fig:robust}
\end{figure}

\begin{figure}[!tb]
    \centering
    \includegraphics[width=0.8\linewidth]{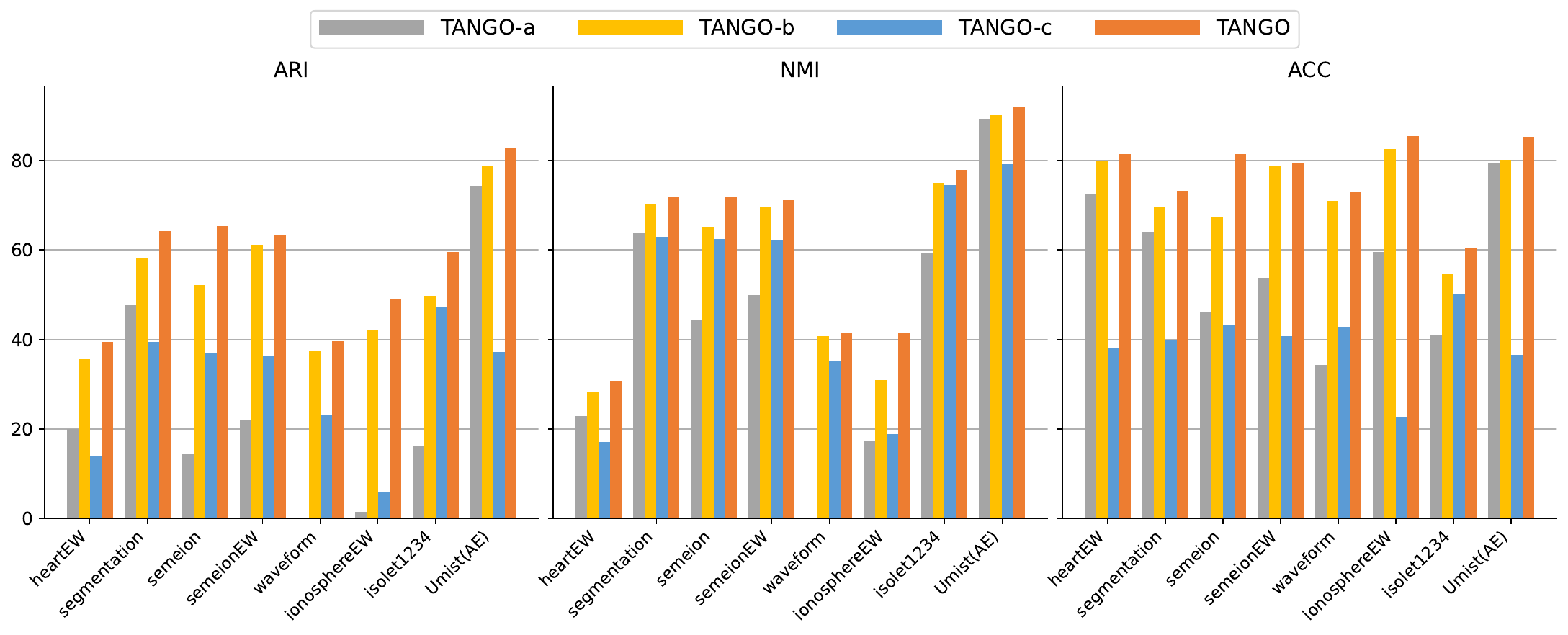}
    \caption{Results of ablation study.}
    \label{fig:ab}
\end{figure}

\subsection{Ablation Study}\label{ablation}
To validate the effectiveness of the main components of TANGO, we conducted ablation experiments by testing the performance after removing certain components:
\begin{itemize}
    \item TANGO-a: Removing the whole typicality-aware mode-seeking step and just directly applying spectral clustering based on path-based similarity (Eq.~\eqref{eq:PBSim}) to all data points. This is to show that the typicality-aware mode-seeking step is essential.
    \item TANGO-b: Including the mode-seeking step but not typicality-aware, to further validate the significance of typicality in mode-seeking procedure.
    \item TANGO-c: Removing the aggregation step on mode-centered sub-clusters. This is to validate that further aggregating the sub-clusters into the final partition with the specified number of clusters is necessary.
\end{itemize}

Fig.~\ref{fig:ab} illustrates the results of the ablation experiments. TANGO-a suggests that identifying representative sub-clusters by mode-seeking first can provide a more robust foundation than directly clustering all data points. TANGO-b performed better than TANGO-a but worse than the full TANGO, indicating that while the general mode-seeking process is valuable, incorporating typicality can further refine the sub-clusters and contribute to the final performance. From TANGO-c, we can see that it is essential to further aggregate the sub-clusters into final partition, especially when a specific number of clusters is required. Without aggregation, the mode-centered sub-clusters, though potentially meaningful locally, still fail to form the globally superior partitions.

Overall, these results highlight that while typicality-aware mode-seeking establishes strong initial sub-clusters, the aggregation step is also crucial to achieve a high-quality final result. TANGO's effectiveness stems from this multi-stage process.

\begin{figure}[!tb]
    \centering
    \includegraphics[width=0.5\linewidth]{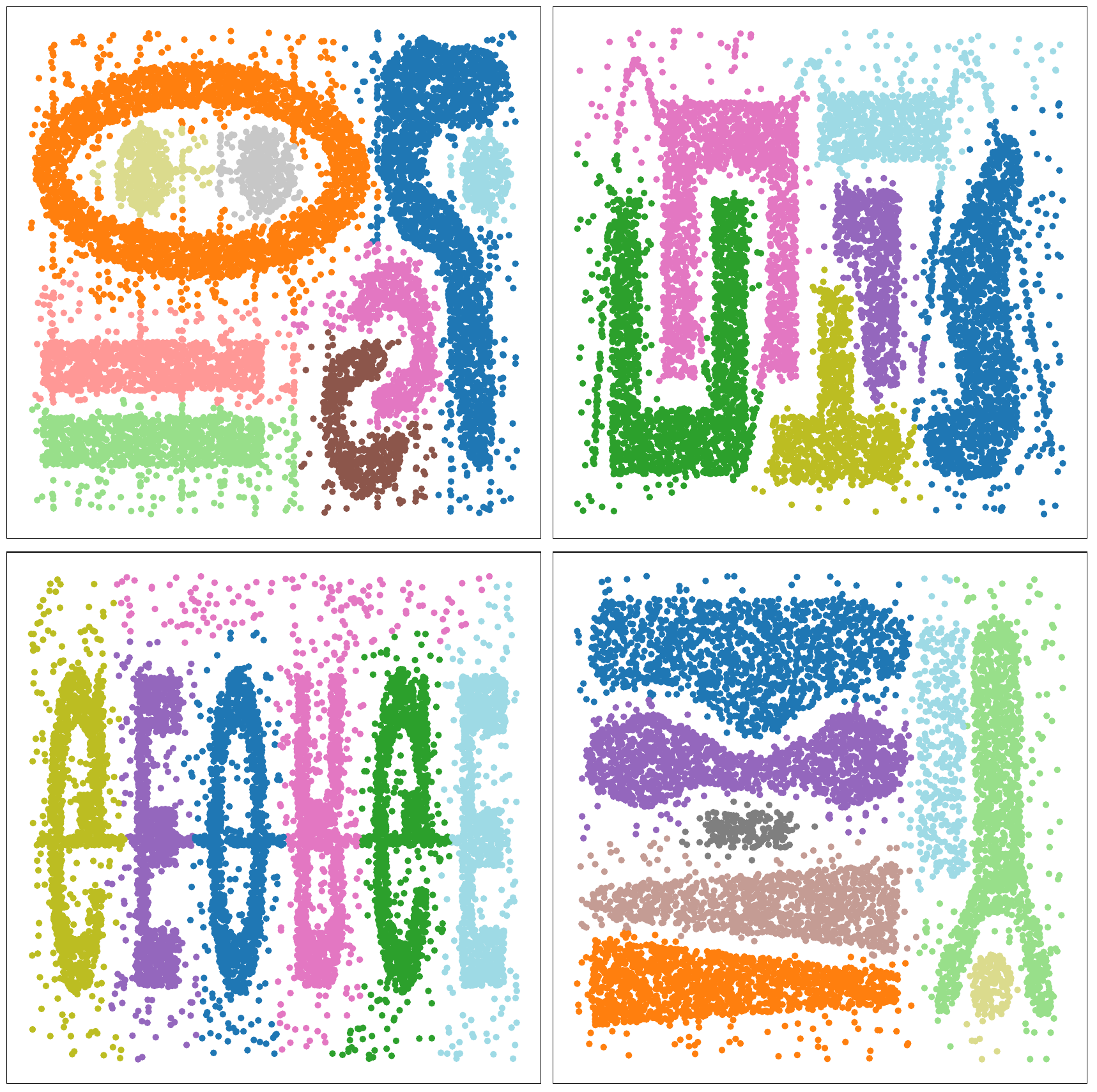}
    \caption{Clustering results for TANGO on four noisy synthetic datasets with irregular shapes.}
    \label{fig:noisy}
\end{figure}

\begin{figure}[!tb]
    \centering
    \includegraphics[width=0.35\linewidth]{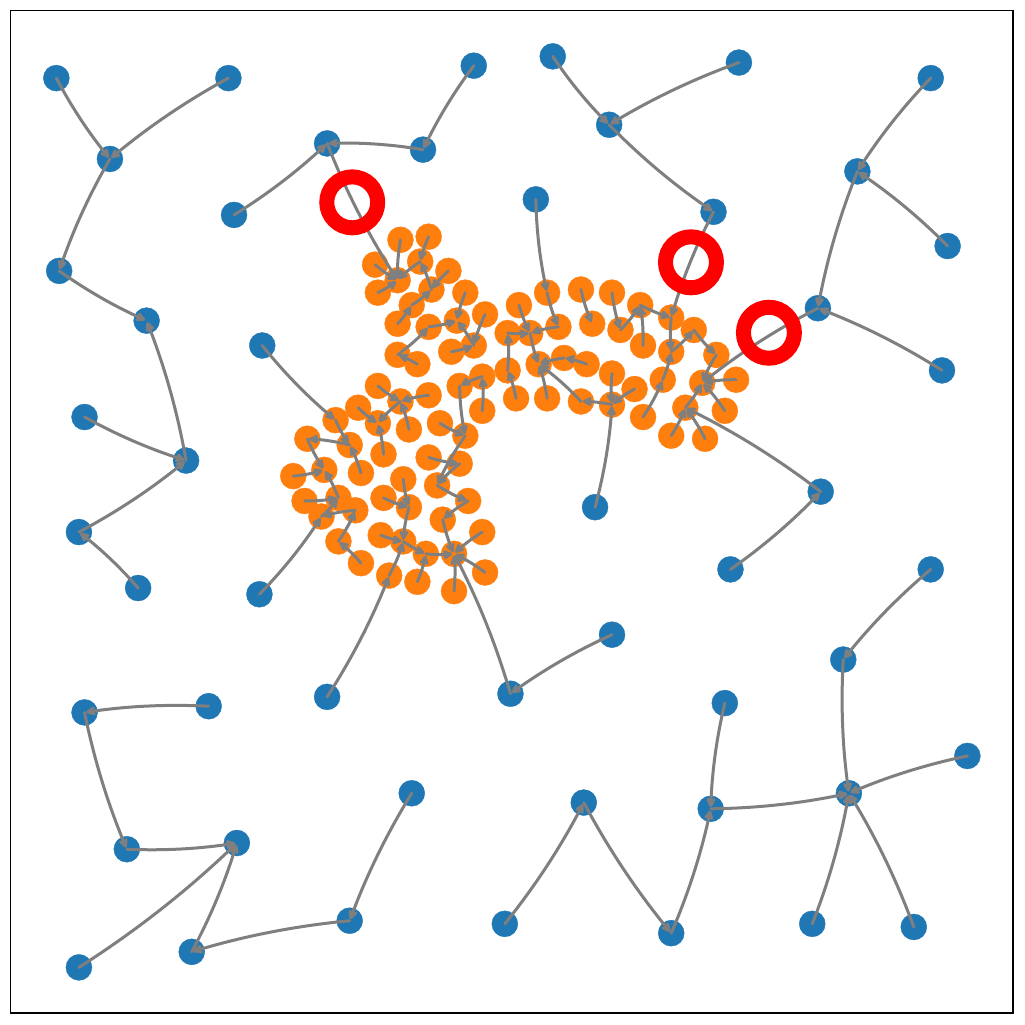}
    \caption{The final dependency connections after the typicality-aware mode-seeking process on a dataset. Some dependency was still wrongly maintained and would cause a significant decrease in performance. We used red circle to mark such dependency.}
    \label{fig:limit}
\end{figure}

\subsection{Additional Discussion}\label{add}

\subsubsection{Validation on Highly Noisy Datasets}\label{noise}

Fig.~\ref{fig:noisy} visualizes the clustering results for TANGO on four noisy datasets with irregular shapes. The results demonstrate that TANGO exhibits high performance and is robust to noise. This may be due to the mode-seeking step that can naturally alleviate the impact of noise, and path-based spectral clustering has the advantage to capture complex data distribution.

\subsubsection{Limitation}\label{limit}

One example of limitation is that TANGO would perform worse when dealing with the dataset in Fig.~\ref{fig:limit}, where the low density points lie uniformly around the high density points with extreme density discrepancy between them. In such a case, the typicality collected by low density points, would be less than that of higher density ones, making the dependency connections between them not broken (marked by red circles). Future work could explore whether other types of dependency can address this situation.

\end{document}